\newtheorem{theorem}{Theorem}[section]
\newtheorem{corollary}[theorem]{Corollary}
\newtheorem{lemma}[theorem]{Lemma}
\newtheorem{proposition}[theorem]{Proposition}
\newtheorem{definition}[theorem]{Definition}
\newtheorem{remark}[theorem]{Remark}
\newcommand{\eps}{\ensuremath{\epsilon}\xspace}
\renewcommand{\tilde}{\widetilde}
\renewcommand{\hat}{\widehat}
\newcommand{\R}{\mathbb{R}}
\DeclareMathOperator{\tr}{tr}
\newcommand{\norm}[1]{\left\|#1\right\|}
\newcommand{\normone}[1]{\norm{#1}_1}
\newcommand{\normtwo}[1]{\norm{#1}_2}
\providecommand{\expect}[2]{\ensuremath{\ifthenelse{\equal{#1}{}}{\mathbb{E}}{\mathbb{E}_{#1}}\!\left[#2\right]}\xspace}
\providecommand{\prob}[2]{\ensuremath{\ifthenelse{\equal{#1}{}}{\Pr}{\Pr_{#1}}\!\left[#2\right]}\xspace}
\newcommand{\mini}[1]{\mbox{minimize} & {#1} &\\}
\newcommand{\maxi}[1]{\mbox{maximize} & {#1 } & \\}
\newcommand{\st}{\mbox{subject to} }
\newcommand{\con}[1]{&#1 & \\}
\newenvironment{lp}{\begin{equation}  \begin{array}{lll}}{\end{array}\end{equation}}
\newenvironment{lp*}{\begin{equation*}  \begin{array}{lll}}{\end{array}\end{equation*}}
\newcommand{\inner}[1]{\langle #1\rangle}
\newcommand{\NN}{\mathcal{N}}
\newcommand{\mus}{\mu^\star}
\newcommand{\abs}[1]{\left|{#1}\right|}
\DeclareMathOperator{\poly}{poly}
\newcommand{\OPT}{\mathrm{OPT}\xspace}
\newcommand{\ALG}{\mathrm{ALG}\xspace}
\newcommand{\mone}{\mathbf{1}}
\def\colorful{1}
\title{High-Dimensional Robust Mean Estimation in Nearly-Linear Time}
\author{
Yu Cheng\thanks{Supported in part by NSF grants CCF-1704656, CCF-1527084, CCF-1535972, CCF-1637397, IIS-1447554, and NSF CAREER Award CCF-1750140.}\\
Duke University\\
{\tt yucheng@cs.duke.edu}\\
\and
Ilias Diakonikolas\thanks{Supported by NSF Award CCF-1652862 (CAREER) and a Sloan Research Fellowship.}\\
University of Southern California\\
{\tt diakonik@usc.edu}\\
\and
Rong Ge\thanks{Supported by NSF Award CCF-1704656.}\\
Duke University\\
{\tt rongge@cs.duke.edu}
}
\begin{document}

\maketitle

\date{}

\setcounter{page}{0}

\thispagestyle{empty}

\begin{abstract}
We study the fundamental problem of high-dimensional mean estimation in a robust model 
where a constant fraction of the samples are adversarially corrupted. Recent work gave the first polynomial time algorithms 
for this problem with dimension-independent error guarantees for several families of structured distributions. 

In this work, we give the first {\em nearly-linear time} algorithms for high-dimensional robust mean estimation.
Specifically, we focus on distributions with (i) known covariance and sub-gaussian tails,
and (ii) unknown bounded covariance. Given $N$ samples on $\R^d$, an $\eps$-fraction 
of which may be arbitrarily corrupted, our algorithms run in time $\widetilde{O}(Nd) / \mathrm{poly}(\eps)$ 
and approximate the true mean within
the information-theoretically optimal error, up to constant factors. Previous robust algorithms with comparable 
error guarantees have running times $\widetilde{\Omega}(Nd^2)$, for $\eps = \Omega(1)$. 

Our algorithms rely on a natural family of SDPs parameterized by our current guess $\nu$
for the unknown mean $\mus$. We give a win-win analysis establishing the following: 
either a near-optimal solution to the primal SDP yields a good candidate for $\mus$ ---  
independent of our current guess $\nu$ --- or a near-optimal solution to the dual SDP yields a new guess $\nu'$
whose distance from $\mus$ is smaller by a constant factor. We exploit the special structure
of the corresponding SDPs to show that they are approximately solvable in nearly-linear time.
Our approach is quite general, and we believe it can also be applied to obtain nearly-linear time algorithms for 
other high-dimensional robust learning problems.
\end{abstract}

\newpage


\section{Introduction}
\label{sec:intro}

\subsection{Background}

Consider the following statistical task: Given $N$ independent samples from an unknown mean and 
identity covariance Gaussian distribution $\mathcal{N}(\mus, I)$ 
on $\mathbb{R}^d$, estimate its mean vector $\mus$ within small $\ell_2$-norm. 
It is straightforward to see that the empirical mean --- the average of the samples --- 
has $\ell_2$-error at most $O(\sqrt{d/N})$ from $\mus$ with high probability.
Moreover, this error upper bound is best possible, within constant factors, among all $N$-sample estimators.
That is, in the aforementioned basic setting, there is a sample-optimal mean estimator 
that runs in linear time. 

In this paper, we study the robust (or agnostic) setting when a constant $\eps < 1/2$ fraction of our samples
can be adversarially corrupted. We consider the following model of robust estimation (see, e.g.,~\cite{DKKLMS16})
that generalizes other existing models, including Huber's contamination model~\cite{Huber64}:

\begin{definition} \label{def:adv}
Given $0< \eps < 1/2$ and a family of distributions $\mathcal{D}$ on $\R^d$, 
the \emph{adversary} operates as follows: The algorithm specifies some number of samples $N$, and $N$ samples $X_1, X_2, \ldots, X_N$ are drawn from some (unknown) $D \in \mathcal{D}$.
The adversary is allowed to inspect the samples, removes $\eps N$ of them, 
and replaces them with arbitrary points. This set of $N$ points is then given to the algorithm.
We say that a set of samples is {\em $\eps$-corrupted}
if it is generated by the above process.
\end{definition}


In the context of robust mean estimation studied in this paper, 
the goal is to output a hypothesis vector $\widehat{\mu}$ such that 
$\|\widehat{\mu} - \mus\|_2$ is as small as possible. How do we estimate $\mus$ in this regime? 
A moment's thought reveals that the empirical mean inherently fails in the robust setting: 
even a single corrupted sample can arbitrarily compromise its performance. 
However, one can construct more sophisticated estimators that are provably robust.
The information-theoretically optimal error for robustly estimating the mean 
of $\mathcal{N}(\mus, I)$ is $\Theta(\eps + \sqrt{d/N})$~\cite{Tukey75, Donoho92, CGR15b}. 
That is, when there are enough samples ($N = \Omega(d/\eps^2)$) one can estimate the mean 
to accuracy $\Theta(\eps)$.~\footnote{Under different assumptions on the distribution of the good data, 
the optimal error guarantee may be different as well (see Section~\ref{subsec:results}).} 
However, the standard robust estimators (e.g., Tukey's median~\cite{Tukey75}) require exponential time in the dimension $d$ to compute.
On the other hand, a number of natural approaches (e.g., naive outlier removal, coordinate-wise median, geometric median, etc.) 
can only guarantee error $\Omega(\eps \sqrt{d})$ (see, e.g.,~\cite{DKKLMS16, LaiRV16}), 
even in the infinite sample regime. That is, the performance of these estimators degrades polynomially 
with the dimension $d$, which is clearly unacceptable in high dimensions.

Recent work~\cite{DKKLMS16, LaiRV16} gave the first polynomial time robust estimators for a range of 
high-dimensional statistical tasks, including mean and covariance estimation. Specifically,~\cite{DKKLMS16}
obtained the first robust estimators for the mean with {\em dimension-independent} error guarantees, i.e., whose
error only depends on the fraction of corrupted samples $\epsilon$ but {\em not} on the dimensionality of the data. 
Since the dissemination of ~\cite{DKKLMS16, LaiRV16}, there has been a substantial number of subsequent
works obtaining robust learning algorithms for a variety of unsupervised and supervised high-dimensional models.
(See Section~\ref{ssec:prior} for a summary of related work.)

Although the aforementioned works gave polynomial time robust learning algorithms for several fundamental learning tasks, 
these algorithms are at least a factor $d$ slower than their non-robust counterparts 
(e.g., the sample average for the case of mean estimation), hence 
are significantly slower in high dimensions. It is an important goal to design robust learning algorithms 
with near-optimal sample complexity that are also nearly as efficient as their non-robust counterparts. 
In particular, we propose the following broad question:


\begin{quote}
{\em Can we design (nearly-)sample optimal robust learning algorithms --- with dimension independent error guarantees 
--- that run in {\bf nearly-linear time}?}
\end{quote}

Here by {\em nearly-linear time}, we mean that the runtime is 
proportional to the size of the input, within poly-logarithmic in the input size and $\poly(1/\eps)$ factors. 
In addition to its potential practical implications, we believe that understanding the 
above question is of fundamental theoretical interest as it can
elucidate the effect of the robustness requirement 
on the computational complexity of high-dimensional statistical learning/estimation.

For example, for the prototypical problem of robustly estimating the mean of a high-dimensional distribution, 
previous robust algorithms~\cite{DKKLMS16, LaiRV16, SteinhardtCV18} have runtime at least $\Omega(N d^2)$ for constant
$\eps$. Since the input size is $\Theta (N d)$, we would like to obtain algorithms that run in time $\tilde{O}(N d)/\poly(\eps)$, 
where the $\tilde{O}(\cdot)$ notation hides logarithmic factors in its argument. As the main contribution of this paper,
we obtain such algorithms under different assumptions about 
the distribution of the good data. Our algorithms have optimal sample complexity, 
provide the information-theoretically optimal accuracy, and --- importantly --- run in time $\tilde{O}(N d)/\poly(\eps)$.





\subsection{Our Results}
\label{subsec:results}
Our first algorithmic result handles the setting where the good data distribution
is sub-gaussian with known covariance. Recall that
a distribution $D$ on $\R^d$ with mean $\mus$ is sub-gaussian 
if for any unit vector $v \in \R^d$ we have that 
$\prob{X \sim D}{|\inner{v, X-\mus}| \geq t} \leq \exp(-t^2/2)$.
For this case, we show\footnote{To avoid clutter in the relevant expressions, 
all algorithms in this paper have high constant success probability. 
By standard techniques, the success probability can be boosted to $1-\tau$, for any $\tau>0$, 
at the cost of a $\log(1/\tau)$ increase in the sample complexity.}:

\begin{theorem} [Robust Mean Estimation for Sub-Gaussian Distributions] \label{thm:subg-mean}
Let $D$ be a sub-gaussian distribution on $\R^d$ with unknown mean $\mus$ and identity covariance.
Let $0 < \eps < 1/3$ and $\delta = O(\eps \sqrt{\log{1/\eps}})$.
Given an $\eps$-corrupted set of $N = \Omega(d/\delta^2)$ samples drawn from $D$, there is an algorithm that runs in time $\widetilde{O}(N d) /\poly(\eps)$ and outputs a hypothesis vector $\widehat{\mu}$ such that with probability at least $9/10$
it holds $\|\widehat{\mu} - \mus\|_2 \leq O(\delta) = O(\eps\sqrt{\log{1/\eps}})$.
\end{theorem}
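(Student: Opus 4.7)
The plan is to cast the problem as iterative refinement of a guess $\nu$ for $\mus$, where each iteration solves a natural primal--dual SDP in nearly-linear time. For a given $\nu$, the SDP searches over weight vectors $w \in [0, 1/((1-2\eps)N)]^N$ with $\sum_i w_i = 1$ to minimize $\|\sum_i w_i (X_i - \nu)(X_i - \nu)^T\|_{\mathrm{op}}$. The per-coordinate cap on $w$ is the crucial ingredient: it forces any feasible $w$ to place $\Omega(1)$ mass on the honest samples $S_{\mathrm{good}}$, preventing the adversary from concentrating weight on the corrupted points.

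First, I would establish a deterministic regularity event on $S_{\mathrm{good}}$ that holds with high probability once $N = \Omega(d/\delta^2)$: for every $T \subseteq S_{\mathrm{good}}$ with $|T| \geq (1-O(\eps))|S_{\mathrm{good}}|$, the empirical mean of $T$ is within $O(\delta)$ of $\mus$ and the empirical covariance has operator norm at most $1 + O(\delta^2/\eps)$. This follows from sub-gaussian concentration together with uniform convergence of empirical covariances over large subsets, and it implies that the uniform distribution on $S_{\mathrm{good}}$ is feasible for the SDP and certifies objective value at most $1 + O(\delta^2/\eps) + \|\nu - \mus\|_2^2$.

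The win-win is then as follows. Given an approximate primal--dual pair $(w, M)$: if the primal objective is at most $1 + O(\delta^2/\eps)$, a mean/variance comparison between $w$ and the uniform distribution on $S_{\mathrm{good}}$ --- both feasible and sharing $\Omega(1)$ mass on $S_{\mathrm{good}}$ --- yields $\|\sum_i w_i X_i - \mus\|_2 = O(\delta)$, \emph{independent of $\nu$}, and we output the weighted mean. Otherwise, the top eigenvector $v$ of the dual matrix $M$ witnesses $w$-weighted variance around $\nu$ much larger than $1 + O(\delta^2/\eps)$; since the regularity event bounds the honest contribution and the per-coordinate cap bounds the corrupted contribution, the excess must come from $|\langle v, \nu - \mus\rangle|$ being a constant fraction of $\|\nu - \mus\|_2$. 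Updating $\nu$ along $v$ using a robust one-dimensional estimate (e.g., a weighted median) of $\langle v, \mus\rangle$ then contracts $\|\nu - \mus\|_2$ by a constant factor. Starting from $\nu_0$ equal to a coordinate-wise median, so that $\|\nu_0 - \mus\|_2 \leq \poly(d)$, only $O(\log d)$ outer iterations are needed.

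For the running time, each outer iteration solves the SDP via matrix multiplicative weights. Each MMW inner step reduces to approximating the top eigenvector of $A_t = \sum_i w_i^{(t)} (X_i - \nu)(X_i - \nu)^T - \alpha_t I$, a sum of $N$ rank-one updates plus a scalar shift; power iteration (or Lanczos) computes it in $\widetilde{O}(Nd)$ time since $A_t x$ costs $O(Nd)$, and $\poly(1/\eps)$ inner iterations suffice. The main obstacle I anticipate is the tight coupling between MMW approximation error and the win-win slack: the regularity event gives only $O(\delta^2/\eps)$ room on the spectral norm, so the SDP must be solved to additive error comfortably smaller than this while keeping total inner-loop cost $\poly(1/\eps)$ rather than $\poly(1/\delta)$. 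This likely requires exploiting the box constraints on $w$ as a packing structure, a careful reduction from relative to additive error, and ensuring that the approximate one-dimensional update step preserves the constant-factor contraction.
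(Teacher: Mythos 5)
Your statistical ``win-win'' analysis is essentially the paper's: the same SDP over box-constrained weights, the same regularity event on large subsets of the good samples (note you will need a \emph{two-sided} bound on the good-sample covariance, since the primal-case argument needs the good samples to contribute at least $1-O(\delta^2/\eps)$ in the bad direction, not just at most $1+O(\delta^2/\eps)$), the same ``good primal value $\Rightarrow$ accurate weighted mean, independent of $\nu$'' lemma, and the same ``near-optimal dual $\Rightarrow$ top eigenvector of $M$ aligns with $\nu-\mus$'' lemma. Your contraction step differs in a harmless way: you re-center along $v$ with a robust one-dimensional estimate (weighted median of projections), whereas the paper estimates the radius $r\approx\normtwo{\nu-\mus}$ from the SDP value and resolves the $\pm v$ ambiguity by comparing the SDP values at the two candidate points; both give a constant-factor contraction once $\normtwo{\nu-\mus}\gg\eps$.

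The genuine gap is the running-time claim for solving the SDP. You assert that matrix multiplicative weights with $\widetilde{O}(Nd)$-time eigenvector steps converges in $\poly(1/\eps)$ iterations to additive accuracy $O(\eps\log(1/\eps))$, but generic MMW iteration counts scale with the \emph{width} of the gain matrices $(X_i-\nu)(X_i-\nu)^\top$, and the corrupted points make this width unbounded (even after naive pruning it is $\widetilde{\Theta}(d)$), so the iteration count would grow polynomially with $d$ and the total time would be $\widetilde{\Omega}(Nd^2)$ or worse --- exactly the bottleneck the theorem is supposed to beat. The paper's resolution is not to run MMW on the min-max formulation at all: it swaps the normalization constraint $\sum_i w_i=1$ with the spectral objective by introducing a scale parameter $\rho$, obtaining a bona fide packing SDP ($\max \mathbf{1}^\top w$ s.t.\ $\sum_i w_i A_i\preceq I$, $w\ge 0$) and its covering dual, performs an $O(\log(d/\eps))$-step binary search over $\rho$ (using monotonicity/continuity of the optimum in $\rho$ and the bound $\normtwo{\nu-\mus}\le O(\eps\sqrt d)$ to bound the search range), and invokes \emph{width-independent} positive-SDP solvers (Allen-Zhu--Lee--Orecchia, Peng--Tardos--Zhu) that reach relative accuracy $1\pm O(\eps)$ in $\widetilde{O}(Nd/\eps^6)$ time; conversion lemmas then map the approximately feasible packing/covering solutions back to the original primal/dual SDPs, absorbing the feasibility slack by relaxing $\eps$ to $2\eps$ (which is why the regularity event is stated for all $w\in\Delta_{N,3\eps}$). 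You gesture at ``exploiting the box constraints as a packing structure,'' but without this reformulation-plus-width-independent-solver step (or an equivalent substitute) the claimed $\widetilde{O}(Nd)/\poly(\eps)$ bound does not follow, and this is the technical heart of the theorem.
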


It is well-known (see, e.g.,~\cite{DKK+17}) that the optimal error guarantee under the assumptions of
Theorem~\ref{thm:subg-mean} is $\Omega(\eps\sqrt{\log 1/\eps})$, even in the infinite sample regime.
Moreover, the sample complexity of the learning problem is known to be $\Omega(d/\delta^2)$ even without corruptions. 
Thus, our algorithm has best possible error guarantee and sample complexity, up to constant factors. 
Prior work~\cite{DKKLMS16, DKK+17} gave algorithms with the same error 
and sample complexity guarantees, but with runtime $\Omega(Nd^2)$, even for constant $\eps$. 
We note that for the very special case that $D = \mathcal{N}(\mus, I)$, an error of $O(\eps)$ is information-theoretically possible.  
However, as shown in~\cite{DKS17-sq}, any Statistical Query algorithm that runs in time $\poly(N)$ needs to have error
$\Omega(\eps \sqrt{\log(1/\eps)})$. Our algorithm achieves this accuracy guarantee in nearly-linear time.
See Section~\ref{ssec:prior} for a detailed summary of previous work.


%

Theorem~\ref{thm:subg-mean} handles the case that the covariance matrix of the good data distribution 
is known a priori. This is a somewhat limiting assumption. 
In our second main algorithmic result, we obtain a similarly robust algorithm 
under the much weaker assumption that the covariance matrix is unknown and bounded from above.
Specifically, we show:

\begin{theorem}[Robust Mean Estimation for Bounded Covariance Distributions] \label{thm:cov-mean}
Let $D$ be a distribution on $\R^d$ with unknown mean $\mus$ and unknown covariance matrix $\Sigma$ such that $\Sigma \preceq \sigma^2 I$.
Let $0 < \eps < 1/3$ and $\delta = O(\sqrt{\eps})$.
Given an $\eps$-corrupted set of $N = \Omega((d \log d) /\eps)$ samples drawn from $D$, there is an algorithm that runs in time $\widetilde{O}(N d) /\poly(\eps)$ and outputs a hypothesis vector $\widehat{\mu}$ such that with probability at least $9/10$ it holds $\|\widehat{\mu} - \mus\|_2 \leq O(\sigma \delta) = O(\sigma\sqrt{\eps})$.
\end{theorem}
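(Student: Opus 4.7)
The plan is to instantiate the SDP-based win-win framework outlined in the introduction for the bounded-covariance regime. For any current guess $\nu$ of $\mus$, I would analyze the saddle-point problem
\begin{equation*}
\min_{w \in \Delta^\eps_N}\ \max_{U \succeq 0,\, \tr U = 1}\ \sum_{i=1}^N w_i \inner{U,\, (X_i - \nu)(X_i - \nu)^{\top}},
\end{equation*}
where $\Delta^\eps_N = \{w : 0 \le w_i \le 1/((1-2\eps)N),\ \sum_i w_i = 1\}$ is the capped simplex, chosen so that every feasible $w$ must assign at least $1 - O(\eps)$ total mass to the uncorrupted samples. The saddle value measures the largest directional second moment of the centered samples under any admissible reweighting.

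\textbf{Step 1 (regularity of the clean samples).} Using matrix Bernstein after truncating samples at radius $O(\sigma\sqrt{d/\eps})$, I would show that $N = \Omega((d \log d)/\eps)$ clean samples suffice to guarantee, with probability $9/10$, that for every $w \in \Delta^\eps_N$ restricted to the good indices, both the operator norm of $\sum_{i \in \text{good}} w_i (X_i - \mus)(X_i - \mus)^{\top}$ is $O(\sigma^2)$ and the weighted mean lies within $O(\sigma\sqrt{\eps})$ of $\mus$. This matrix-concentration step is what forces the slightly larger sample complexity $\Omega((d\log d)/\eps)$, as opposed to the sub-gaussian case.

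\textbf{Step 2 (win-win dichotomy).} Fix an approximate saddle pair $(w, U)$. If the saddle value is at most $C\sigma^2$ for a suitable constant $C$, then combining the cap on $w$ with Step 1 shows that corrupted samples can perturb the weighted mean by at most $O(\sigma\sqrt\eps)$ along any direction, so we output $\widehat\mu = \sum_i w_i X_i$. Otherwise, the direction certified by $U$ must be explained by the corruptions rather than by the clean samples; projecting the difference $\sum_i w_i X_i - \nu$ onto (the principal eigenspace of) $U$ yields a new guess $\nu'$ satisfying $\|\nu' - \mus\|_2 \le \tfrac12 \|\nu - \mus\|_2 + O(\sigma\sqrt\eps)$. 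Iterating this step $O(\log(\sigma d /\eps))$ times starting from a crude initializer such as a coordinate-wise median brings the error down to $O(\sigma\sqrt\eps)$.

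\textbf{Step 3 (nearly-linear time per SDP).} I would solve each SDP approximately via Matrix Multiplicative Weights, which produces an approximate saddle point in $\poly(1/\eps)$ iterations for this packing-style SDP. Each iteration only requires matrix-vector products with $\sum_i w_i (X_i - \nu)(X_i - \nu)^{\top}$, computable in $\widetilde{O}(Nd)$ time by streaming through the samples without ever materializing the $d\times d$ matrix; Johnson-Lindenstrauss sketching inside the matrix-exponential computation keeps the per-iteration cost nearly linear. The main obstacle, and the place where the bounded-covariance case is harder than the sub-gaussian one, is controlling the MMW width: individual samples can have very large norm, so the losses fed into MMW can have unbounded dynamic range. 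I would handle this by a preprocessing pass that truncates points with norm larger than $O(\sigma\sqrt{d/\eps})$ relative to the crude initial guess; by Chebyshev's inequality against the bounded covariance this discards at most an $O(\eps)$ fraction of the clean samples and caps the width at $\poly(d/\eps)$, which is what ultimately enables the $\widetilde{O}(Nd)/\poly(\eps)$ running time.
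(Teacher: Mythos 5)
Your proposal follows the same high-level architecture as the paper's: a primal--dual SDP over a capped simplex parameterized by a guess $\nu$, a win-win dichotomy (small saddle value $\Rightarrow$ output $\hat\mu_w$; large saddle value $\Rightarrow$ use the dual certificate to refine $\nu$), matrix concentration for the clean samples after a truncation-based preprocessing pass, and positive-SDP solvers (MMW) for the nearly-linear running time. These ingredients are all present in the paper's Section~5 (Conditions~\eqref{eqn:conditions-cov}, Remark~\ref{rem:naive-pruning}, Lemmas~\ref{lem:cov-good-primal} and~\ref{lem:cov-good-dual}, and Proposition~\ref{prop:apx-solve-cov}).

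The one genuine deviation is your update rule in Step~2. The paper moves $\nu' = \nu \pm r'\,v_1$, where $v_1$ is the top eigenvector of the dual solution $M$ and $r'$ is an estimate of $\|\nu - \mus\|$ read off the SDP value, followed by a sign-disambiguation step that compares the SDP values at the two candidate points. You instead set $\nu' = \nu + P_U(\hat\mu_w - \nu)$, projecting the (primal) weighted mean difference onto the dual's leading eigenspace. This elegantly avoids the sign ambiguity, but it silently shifts the burden onto an extra claim: that the projection of $\hat\mu_w - \mus$ onto $v_1$ is only $O(\sigma\sqrt\eps\,r)$, so that the projection of $\hat\mu_w - \nu$ reliably tracks that of $\mus - \nu$. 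This does follow from Cauchy--Schwarz applied to $\sum_{i \in B} w_i\inner{X_i-\nu,v_1}^2 \le \lambda_{\max}$ together with $w_B \le 2\eps$ and $\lambda_{\max} = O(r^2)$, but you should state it, since otherwise one might worry that the corrupted mass drags $\hat\mu_w$ off in the $v_1$ direction. With the paper's bound $\alpha_1^2 \ge \tfrac23 r^2$ on the alignment, your update yields $\|\nu'-\mus\| \le r/\sqrt 3 + O(\sigma\sqrt\eps\,r)$, which is a contraction but somewhat weaker than the $\tfrac12 r + O(\sigma\sqrt\eps)$ you claim; the difference is immaterial for the $O(\log d)$ iteration count.

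A second, smaller gap is in Step~3: the SDP~\eqref{eqn:primal-sdp} has a $\lambda_{\max}$ objective and a simplex constraint, which is not directly in packing/covering form. The paper handles this by moving the objective into a constraint with a scaling parameter $\rho$, yielding the packing SDP~\eqref{eqn:primal-standard} and its covering dual~\eqref{eqn:dual-standard}, and then binary-searching over $\rho$ (Lemmas~\ref{lem:primal-dual-convert-back} and~\ref{lem:opt-rho-monotone}, Proposition~\ref{prop:approx-sdp}). You gesture at a ``packing-style SDP'' and MMW but do not describe this reduction or why $(1 \pm O(\eps))$-multiplicative precision on the reformulated problem is needed (which is the only source of $\poly(1/\eps)$ in the runtime). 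Your width-control observation and truncation to radius $O(\sigma\sqrt{d/\eps})$ are correct and match the paper's preprocessing.
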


Similarly, the sample complexity of our algorithm is best possible within a logarithmic factor, even without corruptions; the 
$O(\sigma \sqrt{\eps})$ error guarantee is known to be information-theoretically optimal, up to constants, 
even in the infinite sample regime. Previous algorithms~\cite{DKK+17,SteinhardtCV18} 
gave the same sample complexity and error guarantees, 
but again with significantly higher time complexities in high dimensions. Specifically, the iterative spectral 
algorithm of~\cite{DKK+17} has runtime $\Omega(N^2 \cdot d) = \Omega(d^3/\eps^2)$. 
See Section~\ref{ssec:prior} for more detailed comparisons.


We note that an efficient algorithm for robust mean estimation under bounded covariance assumptions 
has been recently used as a subroutine~\cite{PrasadSBR2018, DiakonikolasKKLSS2018sever} 
to obtain robust learners for a wide range of supervised learning problems 
that can be phrased as stochastic convex programs. This includes linear and logistic regression, 
generalized linear models, SVMs (learning linear separators under hinge loss), and many others.
The algorithm of Theorem~\ref{thm:cov-mean} provides a faster implementation of such a subroutine, 
hence yields faster robust algorithms for all these problems.

\subsection{Related and Prior Work} \label{ssec:prior}


Learning in the presence of outliers is an important goal in 
statistics and has been studied in the robust statistics community 
since the 1960s~\cite{Huber64}. After several decades of work, a number of sample-efficient and 
robust estimators have been discovered (see~\cite{Huber09, HampelEtalBook86} for book-length introductions).
For example, the Tukey median~\cite{Tukey75} is a sample-efficient robust mean estimator 
for various symmetric distributions \cite{Donoho92, CGR15b}. However, it is NP-hard 
to compute in general \cite{JP:78, AmaldiKann:95} and the many heuristics for computing it degrade 
in the quality of their approximation as the dimension scales \cite{Clarkson93, Chan04, MillerS10}. 

Until recently, all known computationally efficient high-dimensional estimators 
could only tolerate a negligible fraction of outliers, 
even for the simplest statistical task of mean estimation. Recent work in the theoretical
computer science community~\cite{DKKLMS16, LaiRV16} gave the first efficient robust
estimators for basic high-dimensional unsupervised tasks, including 
mean and covariance estimation. Since the dissemination of~\cite{DKKLMS16, LaiRV16}, 
there has been a flurry of research activity on robust learning algorithms in both supervised
and unsupervised settings~\cite{BDLS17, CSV17, DKK+17, DKS17-sq, DiakonikolasKKLMS18, SteinhardtCV18, DiakonikolasKS18-mixtures, DiakonikolasKS18-nasty, HopkinsL18, KothariSS18, PrasadSBR2018, DiakonikolasKKLSS2018sever, KlivansKM18, DKS19-lr, LSLC18-sparse, ChengDKS18}.

For the specific task of robust mean estimation,~\cite{DKKLMS16} designs two related algorithmic
techniques with similar sample complexities and error guarantees: a convex programming method
and an iterative spectral outlier removal method (filtering). The former method inherently relies on the ellipsoid
algorithm (leading to polynomial, yet impractical, runtimes), while the latter only requires repeated applications 
of power iteration to compute the highest eigenvalue-eigenvector of a covariance-like matrix. 
The total number of power iteration calls can be as large as $\Omega(d)$, for constant $\eps$, 
leading to runtimes of the form $\tilde{\Omega}(Nd^2)$. We note that the filter-based robust mean estimation 
algorithm, as presented in~\cite{DKKLMS16}, applies to the sub-gaussian case (as in Theorem~\ref{thm:subg-mean}). 
A slight variant of the method~\cite{DKK+17} applies under second 
moment assumptions (as in Theorem~\ref{thm:cov-mean}).  

The work~\cite{LaiRV16} gives a recursive dimension-halving technique with 
near-optimal accuracy, up to a logarithmic factor in the dimension. The aforementioned method 
requires computing the SVD of a second moment matrix $\Omega(\log d)$ times. Consequently, each iteration 
incurs runtime $\Omega(d^3)$. Similarly, the robust mean estimation algorithm under bounded second moments
in~\cite{SteinhardtCV18} requires computing the SVD of a matrix multiple times, leading to $\Omega(d^3)$ runtime.

\subsection{Our Approach and Techniques}
\label{subsec:approach}
In this section, we provide a detailed outline of our algorithmic approach
in tandem with a brief comparison to the most technically relevant prior work.
To robustly estimate the unknown mean $\mus$, we proceed as follows:
Starting with an initial guess $\nu$, in a sequence of iterations
we either certify that the current guess is close to the true mean $\mu^\star$
or refine our current guess with a new one that is provably closer to 
$\mu^\star$.

Let $D$ be the uncorrupted unknown distribution and $\Sigma$ be the covariance of the good samples. 
Then we know that the second order moment $\expect{X\sim D}{(X - \nu)(X-\nu)^\top}$ is equal to $\Sigma$ when $\nu = \mu^\star$, and is equal to $\Sigma + (\nu - \mu^\star)(\nu - \mu^\star)^\top$ in general. Therefore, the second order moment is minimized when $\nu = \mu^\star$. We use this property to distinguish whether our guess $\nu$ is close to $\mu^\star$. Of course, the input contains both good samples and bad (corrupted) samples, and the bad samples can change the first two moments significantly. To get around this problem, we try to reweight the samples: let $\Delta_{N,\eps}$ denote the following set
\[
\Delta_{N,\eps} = \left\{ w \in \R^N : \sum_{i=1}^N w_i = 1 \text{ and } 0 \le w_i \le \frac{1}{(1-\eps) N} \text{ for all } i \right\} \; .
\]
Our approach will try to minimize the weighted second order moment $\sum_{i=1}^N w_i (X_i-\nu)(X_i-\nu)^\top$ for all $w \in \Delta_{N,\eps}$, 
with the intended solution being assigning $1/|G|$ weight to all the good samples. This can be formalized as an SDP:

\begin{lp}
\mini{\lambda_{\max} \left(\sum_{i=1}^N w_i (X_i - \nu) (X_i - \nu)^\top\right)}
\st \con{w \in \Delta_{N, \eps}}
\end{lp}

This SDP is similar to the convex program used in \cite{DKKLMS16} but has some important conceptual 
differences that allow us to get a faster algorithm. The convex program in \cite{DKKLMS16} is essentially 
this SDP with $\nu = \mu^\star$. However, of course one cannot solve it directly 
as we do not know $\mu^\star$. To overcome this difficulty,~\cite{DKKLMS16} designs a separation oracle, 
which roughly corresponds to finding a direction of large variance. The whole convex programming algorithm 
in \cite{DKKLMS16} then relies on the ellipsoid algorithm and is therefore slow in high dimensions. 

In contrast, we fix a guess $\nu$ for the true mean in the SDP. Even though this $\nu$ may not be correct, 
we will establish a win-win phenomenon: either $\nu$ is a good guess of $\mu^\star$
in which case we get a good set of weights, or $\nu$ is far from $\mu^\star$ and we can efficiently find 
a new guess $\nu'$ that is closer to $\mu^\star$ by a constant factor.

More precisely, we will show that for any guess $\nu$ that is sufficiently close to the actual mean $\mu^\star$, 
the optimal value of the SDP is small. In this case, the weights $\{w_i\}_{i=1}^N$ computed by the SDP can be used 
to produce an accurate estimate of the mean: $\hat\mu_w = \sum_{i=1}^n w_i X_i$ (see Lemma~\ref{lem:wrong-mean-primal-nosol}).
Note that in this case the estimate $\hat\mu_w$ will be more accurate than the current guess $\nu$.
On the other hand, when the guess $\nu$ is far from $\mu^\star$, the optimal value of the SDP is large, 
and the optimal dual solution gives a certificate on why the second order moment $\sum_{i=1}^N w_i (X_i-\nu)(X_i-\nu)^\top$ 
cannot be small no matter how we re-weight the samples using $w\in \Delta_{N,\eps}$. 
Intuitively, the reason that the second moment matrix cannot have small spectral norm is because 
of the extra component $(\nu - \mu^\star)(\nu - \mu^\star)^\top$ in the expected second moment matrix. 
That is, the dual solution gives us information about $\nu-\mu^\star$ (Lemma~\ref{lem:good-dual-better-nu}).

So far, we have sketched our approach of reducing the algorithmic problem to solving a small number of SDPs.
To get a fast algorithm, we need to solve the primal and dual SDPs in nearly-linear time.
We achieve this by reducing them to covering/packing SDPs and using the solvers in \cite{AllenLO16,PengTZ16}. 
We note that these solvers rely on the matrix multiplicative weights update method (mirror descent), though we will not use this fact in our analysis.
The main technical challenge here is that the approximate solutions 
to the reduced SDPs may violate some of the original constraints 
(specifically, the resulting $w$ may not be in $\Delta_{N,\eps}$). We show that our main 
arguments are robust enough to handle these mild violations.

A perhaps surprising byproduct of our results is that a natural family of SDPs leads to asymptotically
faster algorithms for robust mean estimation than the previous fastest spectral algorithm~\cite{DKKLMS16} for
the most interesting parameter regime (corresponding to large dimension $d$ so that $d \gg \poly(1/\eps)$).
We view this as an interesting conceptual implication of our results: in our setting, principled SDP formulations 
can lead to faster runtimes compared to spectral algorithms, by exploiting the additional structure of these SDPs.
This phenomenon illustrates the value of obtaining a deeper understanding of such convex formulations.

\subsection{Structure of This Paper}
In Section~\ref{sec:mean}, we describe our algorithmic approach for robust mean estimation 
and use it to obtain our algorithm for sub-gaussian distributions (thus establishing Theorem~\ref{thm:subg-mean}).
In Section~\ref{sec:sdp}, we show that the corresponding SDPs can be solved in nearly-linear time.
In Section~\ref{sec:bounded}, we adapt our approach from Section~\ref{sec:mean} to 
obtain our algorithm for robust mean estimation under bounded covariance assumptions 
(thus establishing Theorem~\ref{thm:cov-mean}). For the clarity of the presentation, some proofs 
have been deferred to an appendix.

\section{Preliminaries}

For $n \in \mathbb{Z}_+$, we use $[n]$ to denote the set $\{1, \ldots, n\}$.
We use $e_i$ for the $i$-th standard basis vector, and $I$ for the identity matrix.
For a vector $x$, we use $\normone{x}$ and $\normtwo{x}$ to denote the $\ell_1$ and $\ell_2$ norms of $x$ respectively.
We use $\inner{x, y}$ to denote the inner product of two vectors $x$ and $y$: $\inner{x, y} = x^\top y = \sum_i x_i y_i$.

For a matrix $A$, we use $\normtwo{A}$ to denote the spectral norm of $A$, and $\lambda_{\max}$ to denote the maximum eigenvalue of $A$.
We use $\tr(A)$ to denote the trace of a square matrix $A$, and $\inner{A, B}$ or $A \bullet B$ for the entry-wise inner product of $A$ and $B$: $\inner{A, B} = A \bullet B = \tr(A^\top B)$.
A symmetric $n \times n$ matrix $A$ is said to be positive semidefinite (PSD) if for all vectors $x \in \R^n$, $x^\top A x \ge 0$.
For two symmetric matrices $A$ and $B$, we write $A \preceq B$ when $B - A$ is positive semidefinite.

Throughout this paper, we use $D$ to denote the ground-truth distribution.
We use $d$ for the dimension of $D$, $N$ for the number of samples, and $\eps$ for the fraction of corrupted samples.
Let $G^\star$ be the original set of $N$ uncorrupted samples drawn from $D$.
After the adversary corrupts an $\eps$-fraction of $G^\star$, we use $G \subseteq G^\star$ to denote the remaining set of good samples,
  and $B$ to denote the set of bad samples added by the adversary.
Note that $G \cup B$ is the input given to the algorithm, and we have $G \subseteq G^\star$, $|G| \ge (1-\eps)N$, and $|B| \le \eps N$.

We use $\mus$ to denote the (unknown) true mean of $D$, and $\nu$ to be our current guess for $\mus$.
We write $X_i$ for the $i$-th sample.
Both $\mus$, $\nu$, and the $X_i$'s are $d \times 1$ column vectors.
For a vector $w \in \R^N$, we define $w_G = \sum_{i \in G} w_i$ and $w_B = \sum_{i \in B} w_i$,
  and we use $\hat \mu_w = \sum_{i \in [N]} w_i X_i$ to denote the empirical mean weighted by $w$.

We call a vector $w \in \R^N$ a uniform distribution over a set $S \subseteq [N]$ if $w_i = \frac{1}{|S|}$ for all $i \in S$ and $w_i = 0$ otherwise.
Let $\Delta_{N,\eps}$ denote the convex hull of all uniform distributions over subsets $S \subseteq [N]$ of size $|S| = (1-\eps)N$. Formally,
$\Delta_{N,\eps} = \{w \in \R^N : \sum_i w_i = 1 \text{ and } 0 \le w_i \le \frac{1}{(1-\eps) N} \text{ for all } i\}.$


\section{Robust Mean Estimation for Known Covariance Sub-Gaussian Distributions}
\label{sec:mean}

In this section, we will describe our algorithmic technique and 
give an algorithm establishing Theorem~\ref{thm:subg-mean}. 



As we described in Section~\ref{subsec:approach}, our algorithm is going to make a guess $\nu$ 
for the actual mean $\mu^\star$, and try to certify its correctness by an SDP. 
In Section~\ref{subsec:alg}, we give the SDP formulation and describe the entire algorithm. 
In Section~\ref{subsec:sdpvalue}, we show that the optimal value of the primal/dual SDPs 
are closely related to the distance $\normtwo{\nu-\mu^\star}$. When the current guess $\nu$ is close to $\mu^\star$, 
we show (Section~\ref{subsec:primal}) that the solution to the primal SDP is going to give 
an accurate estimate of $\mu^\star$. On the other hand, 
when the current guess $\nu$ is far, in Section~\ref{subsec:dual} we 
analyze the dual solution and show how to find a new guess $\nu'$ that is closer to $\mu^\star$. 
Finally, we combine these techniques and prove Theorem~\ref{thm:subg-mean} 
in Section~\ref{subsec:mainproof}.

\subsection{SDP Formulation and Algorithm Description}
\label{subsec:alg}
As we mentioned in Section~\ref{subsec:approach}, we will use an SDP to try to certify that 
our current guess $\nu$ is close to the true mean $\mu^\star$. To achieve that, we assign weights 
$w_i$ to the samples while making sure that $w\in \Delta_{N,\epsilon}$. 
More precisely, the primal SDP with parameter $\nu\in \R^d$ and $\eps > 0$ is defined below:


\begin{lp}
\label{eqn:primal-sdp}
\mini{\lambda_{\max} \left(\sum_{i=1}^N w_i (X_i - \nu) (X_i - \nu)^\top\right)} 
\st \con{w \in \Delta_{N, \eps}}
\end{lp}

Intuitively, this SDP tries to re-weight the samples to minimize the second moment 
matrix $\sum_{i=1}^N w_i (X_i - \nu) (X_i - \nu)^\top$. The intended solution to this SDP is to assign 
weight $1/|G|$ on each of the good samples. This solution will have a small objective value 
whenever $\nu$ is close to $\mu^\star$.

When $\nu$ is far from $\mu^\star$, we need to consider the dual of \eqref{eqn:primal-sdp}.
We will first derive the dual of \eqref{eqn:primal-sdp}.
Note that the primal SDP is equivalent to the following:
\[ \min_{w \in \Delta_{N,\eps}} \max_{M \succeq 0, \tr(M)=1} \inner{M, \sum_i w_i (X_i - \nu)(X_i - \nu)^\top} \;. \]
Strong duality holds in our setting because the primal SDP admits a strictly feasible solution. The dual SDP can now
be written as follows:
\[ \max_{M \succeq 0, \tr(M)=1} \min_{w \in \Delta_{N,\eps}} \inner{M, \sum_i w_i (X_i - \nu)(X_i - \nu)^\top} \;. \]
Observe that once we fix a dual solution $M$, it is easy to minimize the objective function over $w$: 
{the minimum value is attained by assigning weight $w_i = \frac{1}{(1-\eps)N}$ to the smallest $(1-\eps)N$ inner products.
Therefore, the dual SDP can be equivalently written as follows:
\begin{lp}
\label{eqn:dual-sdp}
\maxi{\text{average of the smallest $(1-\eps)$-fraction of $\left((X_i - \nu)^\top M (X_i - \nu)\right)_{i=1}^N$}}
\st \con{M \succeq 0, \tr(M) \le 1}
\end{lp}


The dual SDP~\eqref{eqn:dual-sdp} certifies that there are no good weights that can make the spectral norm small. 
The intended solution for the dual is $M = yy^\top$, where $y = \frac{\nu-\mu^\star}{\normtwo{\nu-\mu^\star}}$ 
is the direction between $\nu$ and $\mu^\star$. Note that when $M = yy^\top$, 
the value $(X_i - \nu)^\top M (X_i - \nu)$ is exactly the squared norm of the projection in the direction $y$. 
Intuitively, if we project the samples onto the direction of $y$, the mean of the good samples is 
going to be at distance $\normtwo{\nu-\mu^\star}$, so 
even after removing the farthest $\eps$-fraction of the projected samples 
one cannot make the remaining values of $(X_i - \nu)^\top M (X_i - \nu)$ small. 
Of course, in general, the dual solution can be of rank higher than $1$, 
but we will show that any near-optimal dual solution must be close to rank $1$  
later in Section~\ref{subsec:dual}.

The SDPs are parameterized by $\eps > 0$ and $\nu \in \R^d$, 
which is our current guess of the true mean $\mus$.
We will solve both SDPs multiple times for different values of $\nu \in \R^d$, 
and we will update $\nu$ iteratively based on the solutions to previous SDPs.
Eventually, we will obtain some $\nu$ that is close enough to $\mus$, so that the primal SDP 
is going to provide a good set of weights $w$, and we can output the weighted 
empirical mean $\hat \mu_w = \sum_i w_i X_i$.



To avoid dealing with the randomness of the good samples, we require the following 
deterministic conditions on the original set of $N$ good samples $G^\star$ (which hold with probability $1-\tau$) drawn from the sub-gaussian distribution.
For all $w\in \Delta_{N,3\eps}$, we require the following conditions to hold for $\delta = c_1(\eps \sqrt{\log 1/\eps})$ and $\delta_2 = c_1(\eps \log 1/\eps)$ for some universal constant $c_1$:
\begin{align}
\label{eqn:good-sample-moments}
\normtwo{\sum_{i \in G^\star} w_i (X_i - \mus)} \le \delta \; , \; \quad
\normtwo{\sum_{i \in G^\star} w_i (X_i - \mus) (X_i - \mus)^\top - I} \le \delta_2 \; , \\
\forall i\in G^\star, \; \normtwo{X_i-\mus} \le O(\sqrt{d\log(N/\tau)}) \; . \label{eqn:good-pruning}
\end{align}
Intuitively, Equations \eqref{eqn:good-sample-moments} show that removing any $\eps$-fraction of good samples 
will not distort the mean and the covariance by too much. 
Equation \eqref{eqn:good-pruning} says that the good samples are not too far from the true mean.

We note that the above deterministic conditions are identical to the ones used in the 
convex programming technique of~\cite{DKKLMS16} to robustly learn the mean of $\mathcal{N}(\mus, I)$.
We note that the proof of these concentration inequalities does not require the Gaussian assumption, 
and it directly applies to sub-Gaussian distributions with identity covariance.
It follows from the analysis in~\cite{DKKLMS16} that 
after $N = \Omega(\delta^{-2} (d + \log(1/\tau)))$ samples, 
these conditions hold with probability at least $1-\tau$ on the set of good samples.

Throughout the rest of this section, we will assume that the above conditions are satisfied
where we set the parameter $\tau$ to be a sufficiently small universal constant; selecting $\tau = 1/30$ suffices for all our arguments.


We are now ready to present our algorithm (Algorithm~\ref{alg:pc-sdp}) to robustly estimate the mean of known covariance sub-gaussian distributions. 

\begin{algorithm}[h]
  \caption{Robust Mean Estimation for Known Covariance Sub-Gaussian}
  \label{alg:pc-sdp}
  \SetKwInOut{Input}{Input}
  \SetKwInOut{Output}{Output}
  \Input{An $\eps$-corrupted set of $N$ samples $\{X_i\}_{i=1}^N$ on $\R^d$ with $N = \tilde \Omega(d/\eps^2)$ and $\eps < 1/3$.}
  \Output{A vector $\hat \mu \in \R^d$ such that, with probability $9/10$, $\normtwo{\hat \mu - \mus} \le O(\eps \sqrt{\log(1/\eps)})$.}
  Let $\nu \in \R^d$ be the coordinate-wise median of $\{X_i\}_{i=1}^N$\;
  \For{$i = 1$ {\bf to} $O(\log d)$}{
   Use Proposition~\ref{prop:approx-sdp} to compute either \\
   {\em (i)} A good solution $w \in \R^N$ for the primal SDP~\eqref{eqn:primal-sdp} with parameters $\nu$ and $2\eps$; or \\
   {\em (ii)} A good solution $M \in \R^{d \times d}$ for the dual SDP~\eqref{eqn:dual-sdp} with parameters $\nu$ and $\eps$\;
   \eIf{the objective value of $w$ in SDP~\eqref{eqn:primal-sdp} is at most $1 + c_4 (\eps \ln(1/\eps))$}{
     \Return{the weighted empirical mean $\hat \mu_w = \sum_{i=1}^N w_i X_i$} (Lemma~\ref{lem:wrong-mean-primal-nosol})\;
   }{
     Move $\nu$ closer to $\mus$ using the top eigenvector of $M$ (Lemma~\ref{lem:good-dual-better-nu}).    }
  }
\end{algorithm}

\medskip 

\noindent {\bf Notation.} In this section, we will use $c_1, \ldots, c_7$ to denote universal constants that are independent of $N$, $d$, and $\eps$.
We will give a detailed description on how to set these constants in Appendix~\ref{apx:const}.


\subsection{Optimal Value of the SDPs}
\label{subsec:sdpvalue}

In this subsection, we will give upper and lower bounds on the optimal value of the SDPs \eqref{eqn:primal-sdp} and \eqref{eqn:dual-sdp}.
Recall that our high-level idea is to use the dual SDP to improve our guess $\nu$, until it is close enough to the true mean $\mus$, and then solve the primal SDP to get a good set of weights.
However, we cannot write an if statement based on $r = \normtwo{\nu - \mus}$ because we do not know $\mus$.

Lemma~\ref{lem:ub-lb-opt} allows us to estimate $r$ from the optimal value of the SDPs.
We will bound the optimal value of the SDPs from both sides using feasible primal and dual solutions.
Let $\OPT_{\nu,\eps}$ denote the optimal value of the SDPs~\eqref{eqn:primal-sdp},~\eqref{eqn:dual-sdp} 
with parameters $\nu$ and $\eps$. The following lemma shows that when $\eps$ is small 
and $\nu$ is far away from $\mus$, then both the optimal values $\OPT_{\nu,\eps}$ and $\OPT_{\nu,2\eps}$ 
are close to $1 + \normtwo{\mus-\nu}^2$. 

\begin{lemma}[Optimal Value of the SDPs]
\label{lem:ub-lb-opt}
Fix $0 < \eps < 1/3$ and $\nu \in \R^d$.
Let $\delta = c_1 \eps \sqrt{\ln(1/\eps)}$, $\delta_2 = c_1 \eps \ln(1/\eps)$ and $\beta = \sqrt{\eps \ln(1/\eps)}$.
Let $\{X_i\}_{i=1}^N$ be an $\eps$-corrupted set of $N = \Omega(d/\delta^2)$ samples drawn from a sub-gaussian distribution
with identity covariance. 
Let $\OPT_{\nu,\eps}$ denote the optimal value of the SDPs~\eqref{eqn:primal-sdp},~\eqref{eqn:dual-sdp} 
with parameters $\nu$ and $\eps$.
Let $r = \normtwo{\nu - \mus}$.
Then, we have:
\begin{align*}
(1 - \delta_2) + r^2 - 2 \delta r & \le \OPT_{\nu,2\eps} 
  \le \OPT_{\nu,\eps} \le (1 + \delta_2) + r^2 + 2\delta r \; .
\end{align*}
In particular, when $r \ge c_2 \beta$, we can simplify the above as
\[
1 + 0.9 r^2 \le \OPT_{\nu,2\eps} \le \OPT_{\nu,\eps} \le 1 + 1.1 r^2 \; .
\]
\end{lemma}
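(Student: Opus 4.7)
The plan is to prove the upper and lower bounds on $\OPT_{\nu,\cdot}$ by exhibiting explicit primal/dual feasible solutions and using the deterministic conditions \eqref{eqn:good-sample-moments}. The middle inequality $\OPT_{\nu,2\eps}\leq\OPT_{\nu,\eps}$ is immediate since $\Delta_{N,\eps}\subseteq\Delta_{N,2\eps}$. For the upper bound on $\OPT_{\nu,\eps}$, I would plug in $w^*$ uniform over $G$: $w^*_i = 1/|G|$ on $G$ (and $0$ off it), which lies in $\Delta_{N,\eps}\subseteq\Delta_{N,3\eps}$ since $|G|\geq(1-\eps)N$. Expanding $X_i-\nu = (X_i-\mus)-(\nu-\mus)$ in the rank-one outer product writes the weighted second moment as $\hat A - \hat B(\nu-\mus)^\top - (\nu-\mus)\hat B^\top + (\nu-\mus)(\nu-\mus)^\top$, where $\hat A = |G|^{-1}\sum_{i\in G}(X_i-\mus)(X_i-\mus)^\top$ and $\hat B = |G|^{-1}\sum_{i\in G}(X_i-\mus)$. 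Applying \eqref{eqn:good-sample-moments} to $w^*$ (supported on $G\subseteq G^\star$) gives $\|\hat A - I\|\leq\delta_2$ and $\|\hat B\|\leq\delta$, and a spectral-norm triangle inequality then yields $\lambda_{\max}(\cdot)\leq(1+\delta_2)+2\delta r+r^2$.

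For the lower bound on $\OPT_{\nu,2\eps}$, I would use the dual-feasible $M=yy^\top$ with $y=(\nu-\mus)/r$ (any unit vector if $r=0$). For any $w\in\Delta_{N,2\eps}$, dropping the nonnegative bad contributions gives
\[\sum_i w_i(y^\top(X_i-\nu))^2 \;\geq\; w_G\sum_{i\in G}\hat w_i(y^\top(X_i-\nu))^2,\]
where $\hat w_i = w_i/w_G$ on $G$. The crucial observation is $w_G\geq(1-3\eps)/(1-2\eps)$, because $|B|\leq\eps N$ caps the total bad mass at $\eps/(1-2\eps)$; this forces $\hat w\in\Delta_{N,3\eps}$ and makes \eqref{eqn:good-sample-moments} applicable to $\hat w$. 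Expanding $(y^\top(X_i-\nu))^2 = (y^\top(X_i-\mus))^2 - 2r\cdot y^\top(X_i-\mus) + r^2$ and invoking the resulting $\hat A$, $\hat B$ bounds gives the right-hand side $\geq w_G[(1-\delta_2)+r^2-2\delta r]$; the $(1-w_G)=O(\eps)$ slack is absorbed into the constant $c_1$ appearing in $\delta_2 = c_1\eps\log(1/\eps)$ by choosing it large. For the simplification when $r\geq c_2\beta$, we have $\delta/r = O(1/c_2)$ and $\delta_2/r^2 = O(1/c_2^2)$, so picking $c_2$ sufficiently large makes $\delta_2+2\delta r\leq 0.1 r^2$, yielding $1+0.9 r^2\leq\OPT_{\nu,2\eps}\leq\OPT_{\nu,\eps}\leq 1+1.1 r^2$.

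The main technical obstacle is that weights $w\in\Delta_{N,2\eps}$ can concentrate on bad samples, which the adversary may have placed so that $(y^\top(X_i-\nu))^2=0$, effectively erasing them from the minimization. The normalization trick $\hat w = w|_G/w_G$ handles this by recasting the analysis on the $\Delta_{N,3\eps}$-simplex over $G^\star$, where the deterministic concentration holds; the inflation of the cap $1/((1-2\eps)N)\to 1/((1-3\eps)N)$ after rescaling is exactly covered by the $3\eps$ slack built into condition \eqref{eqn:good-sample-moments}, and the residual $O(\eps)$ shortfall is folded into the universal constants.
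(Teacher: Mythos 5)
Your proposal follows essentially the same route as the paper's proof: the upper bound uses the uniform weights on $G$ together with Condition~\eqref{eqn:good-sample-moments}, the middle inequality is the containment $\Delta_{N,\eps}\subseteq\Delta_{N,2\eps}$, and the lower bound uses the same dual witness $M=yy^\top$ with $y$ along $\nu-\mus$, applying the conditions to a renormalized weight vector in $\Delta_{N,3\eps}$ (the paper's choice of $S$ = the smallest $(1-3\eps)N$ good entries with uniform weights is exactly your $\hat w$ in a special case).

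The one point to flag is your final absorption step. Your lower bound comes out as $w_G\left[(1-\delta_2)+r^2-2\delta r\right]$ with $w_G\ge(1-3\eps)/(1-2\eps)$, and you claim the $(1-w_G)=O(\eps)$ slack can be folded into $c_1$. That works for the constant term, but not for the quadratic term: the deficit is of order $\eps r^2$, which cannot be hidden in $\delta_2=c_1\eps\ln(1/\eps)$ (no $r$-dependence) or in $2\delta r$ (only linear in $r$) once $r\gg\sqrt{\ln(1/\eps)}$ --- and this regime is relevant, since the initial guess can have $r=O(\eps\sqrt d)$. The factor is not an artifact of the analysis: if the adversary places its $\eps N$ points at $\nu$, then $\OPT_{\nu,2\eps}$ really is about $\frac{1-3\eps}{1-2\eps}(1+r^2)$, so no choice of constants recovers the first display with coefficient exactly $1$ on $r^2$ for large $r$. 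To be fair, the paper's own proof has the same issue and simply drops the factor silently (it replaces the average over the smallest $(1-2\eps)N$ entries, weight $\frac{1}{(1-2\eps)N}$ each, by the uniform average over $S$ with weight $\frac{1}{(1-3\eps)N}$). What your argument does establish cleanly --- and what the rest of the paper actually uses --- is the simplified conclusion $1+0.9r^2\le\OPT_{\nu,2\eps}$ for $r\ge c_2\beta$, where the multiplicative $(1-O(\eps))$ loss is eaten by the $0.9$ provided $\eps$ is a sufficiently small constant; it is worth stating that explicitly rather than attributing the loss to $c_1$.
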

\begin{proof}
We first prove the argument for $\OPT = \OPT_{\nu, \eps}$.

One feasible primal solution is to set $w_i = \frac{1}{|G|}$ for all $i \in G$ (and $w_i = 0$ for all $i \in B$).
Therefore, 
\begin{align*}
\OPT & \le \lambda_{\max} \left( \sum_{i=1}^N w_i (X_i - \nu) (X_i - \nu)^\top \right) 
  = \max_{y \in \R^d, \normtwo{y}=1} \sum_{i\in G} w_i \inner{X_i - \nu, y}^2 \\
  & = \max_{y \in \R^d, \normtwo{y}=1} \left(\sum_{i\in G} w_i \inner{X_i - \mus, y}^2 + \inner{\mus - \nu, y}^2 + 2\inner{\sum_{i\in G} w_i (X_i - \mus), y}\inner{\mus - \nu, y}\right) \\
  & \le \max_{y \in \R^d, \normtwo{y}=1} \left((1+\delta_2) + \inner{\mus - \nu, y}^2 + 2\delta \inner{\mus - \nu, y}\right) \\
  & = (1 + \delta_2) + \normtwo{\mus - \nu}^2 + 2\delta \normtwo{\mus - \nu} \; .
\end{align*}
Notice that $w$ can be viewed as a weight vector on $G^\star$ and we have $w \in \Delta_{N,\eps}$.
This allows us to use Condition~\eqref{eqn:good-sample-moments} in the second to last step.

One feasible dual solution is $M = yy^\top$ where $y = \frac{\mus - \nu}{\normtwo{\mus - \nu}}$.
The dual objective value is the mean of the smallest $(1-\eps)$-fraction of $\left((X_i-\nu)^\top M (X_i-\nu)\right)_{i=1}^N$, which is at least
\[
\frac{1}{(1-\eps)N} \min_{S \subset G, |S| = (1-2\eps)N} \sum_{i\in S} (X_i-\nu)^\top M (X_i-\nu) \; .
\]
This is because $|G| = (1-\eps)N$, the smallest $(1-\eps)N$ entries must include $S$, where $S$ is the smallest $(1-2\eps)N$ entries in $G$.
Let $w'_i = \frac{1}{|S|}$ for all $i \in S$ and $w'_i = 0$ otherwise.
Note that $S \subset G$ and $|S| = (1-2\eps)N$, so $w'$ can be viewed as a weight vector on $G^\star$ with $w' \in \Delta_{N,2\eps}$.
Therefore we have 
\begin{align*}
\OPT & \ge \sum_{i\in S} \frac{1}{|S|} (X_i-\nu)^\top M (X_i-\nu) = \sum_{i \in G} w'_i \inner{X_i - \nu, y}^2 \\
  & = \sum_{i \in G} w'_i \inner{X_i - \mus, y}^2 + w'_G \normtwo{\mus - \nu}^2 + 2 \sum_{i \in G} w'_i \inner{X_i - \mus, y} \normtwo{\mus - \nu} \\
  & \ge (1 - \delta_2) + \normtwo{\mus - \nu}^2 - 2 \delta \normtwo{\mus - \nu} \; . 
\end{align*}

Now we consider $\OPT_{\nu, 2\eps}$.
Intuitively, $\OPT_{\nu, 2\eps} \approx \OPT_{\nu, \eps}$ because both SDPs can throw away the bad samples first, and whether we allow them to throw away another $\eps$-fraction of good samples should not affect the moments too much.
It is easy to see that $\OPT_{\nu, 2\eps} \le \OPT_{\nu, \eps}$, because the feasible region with parameter $2\eps$ is strictly larger ($\Delta_{N, 2\eps} \supset \Delta_{N,\eps}$) for the primal SDP.

It remains to show that the same lower bound holds for $\OPT_{\nu, 2\eps}$.
For the dual SDP with parameter $2\eps$, the objective is the mean of the smallest $(1-2\eps)$-fraction of the entries, so we pick $S$ to be the smallest $(1-3\eps)N$ entries in $G$ and $w'_i = \frac{1}{(1-3\eps)N}$ for all $i\in S$ instead.
Note that Condition~\eqref{eqn:good-sample-moments} holds for all $w \in \Delta_{N,3\eps}$, and the rest of the proof is identical.

To obtain the simpler upper and lower bounds when $r \ge c_2 \beta$, we note that the error term $\delta_2 + 2\delta r = \Theta(\eps \log(1/\eps)) = \Theta(r^2)$, so by increasing $c_2$ we can get $1 + 0.9 r^2 \le \OPT \le 1 + 1.1 r^2$.
\end{proof}

\subsection{When Primal SDP Has Good Solutions}
\label{subsec:primal}
In this section, we show that a good primal solution for {\em any} guess $\nu$ will give an accurate weighted empirical mean.
Lemma~\ref{lem:wrong-mean-primal-nosol} proves the contrapositive statement: 
if the weighted empirical mean $\hat \mu_w$, with respect to weight-vector $w$,
is far from the true mean, then no matter what our current guess $\nu$ is, $w$ cannot be a good solution to the primal SDP.
More specifically, we show that the objective value of $w$ is at least $1 + \Omega(\delta^2/\eps)$.
Roughly speaking, we get a contribution of $1$ from the good samples and a contribution of 
$\Omega(\delta^2/\eps)$ from the bad samples.

We briefly explain why the bad samples contribute $\Omega(\delta^2/\eps)$.
The empirical mean of the good samples is off by at most $\delta$ by Condition~\eqref{eqn:good-sample-moments}.
Now if $\hat \mu_w$ is far away from $\mus$, the bad samples must shift the mean by more than $\Omega(\delta)$.
Intuitively, if an $\eps$-fraction of the samples distort the mean by $\delta$, on average each of these sample contributes an error of $\delta/\eps$, which introduces a total error of $\eps(\delta/\eps)^2 = \delta^2/\eps$ in the second moment matrix.

We use $\beta = \sqrt{\eps \ln(1/\eps)} = \Theta(\sqrt{\delta^2/\eps})$ to denote (asymptotically) 
the distance between $\nu$ and $\mus$ at the end of our algorithm.
This threshold appears naturally because if $\normtwo{\nu - \mus} \gg \beta$, 
then Lemma~\ref{lem:ub-lb-opt} tells us that $\OPT - 1 \gg \beta^2 = \delta^2/\eps$.
This error subsumes the potential error we could get due to the bad samples 
shifting the mean by more than $\Omega(\delta)$, so we must guess some $\nu$ that is $O(\beta)$ 
from $\mus$ to detect the bad samples.
Note that given some $\nu$ at distance $O(\beta)$ to $\mus$, an optimal 
solution $w$ to the primal SDP will give a much better estimate $\hat \mu_w$ 
that is $O(\delta) \ll O(\beta)$ close to $\mus$.

\begin{lemma}[Good Primal Solution $\Rightarrow$ Correct Mean]
\label{lem:wrong-mean-primal-nosol}
Fix $0 < \eps < 1/3$.
Let $\delta = c_1 \eps \sqrt{\ln(1/\eps)}$, $\delta_2 = c_1 \eps \ln(1/\eps)$ and $\beta = \sqrt{\eps \ln(1/\eps)}$.
Let $\{X_i\}_{i=1}^N$ be a set of $\eps$-corrupted samples drawn from a sub-gaussian distribution
with identity covariance, where $N = \Omega(d/\delta^2)$.
For all $w \in \Delta_{N,2\eps}$, if $\normtwo{\hat \mu_w - \mus} \ge c_3 \delta$ where $\hat \mu_w = \sum_{i=1}^N w_i X_i$, then for all $\nu \in \R^d$,
\[
\lambda_{\max} \left( \sum_{i=1}^N w_i (X_i - \nu) (X_i - \nu)^\top \right) \ge 1 + c_4 \beta^2 \; . 
\]
\end{lemma}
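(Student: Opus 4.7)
The plan is to reduce the problem to lower-bounding a quadratic form along one well-chosen direction, then combine the good-sample concentration conditions with Cauchy--Schwarz on the bad samples. First, using $\sum_i w_i = 1$, a direct expansion yields the identity
\[
\sum_i w_i (X_i - \nu)(X_i - \nu)^\top = \sum_i w_i (X_i - \hat\mu_w)(X_i - \hat\mu_w)^\top + (\hat\mu_w - \nu)(\hat\mu_w - \nu)^\top,
\]
since the cross term $\sum_i w_i (X_i - \hat\mu_w)$ vanishes. As the second summand is PSD, $\lambda_{\max}$ is minimized at $\nu = \hat\mu_w$, so it suffices to prove the bound for that particular $\nu$.

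Set $r := \normtwo{\hat\mu_w - \mus} \ge c_3 \delta$ and test along $y := (\hat\mu_w - \mus)/r$. Using $\sum_i w_i \inner{X_i - \mus, y} = \inner{\hat\mu_w - \mus, y} = r$, a short expansion gives
\[
\sum_i w_i \inner{X_i - \hat\mu_w, y}^2 = \sum_i w_i \inner{X_i - \mus, y}^2 - r^2,
\]
so it remains to show $\sum_i w_i \inner{X_i - \mus, y}^2 \ge 1 + r^2 + c_4 \beta^2$.

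I plan to split this sum across good indices $G$ and bad indices $B$. For the good samples, define the rescaled weight $w'_i := w_i/w_G$ for $i \in G$ and $w'_i := 0$ otherwise; since $w_G \ge (1-3\eps)/(1-2\eps)$, one checks $w'_i \le 1/((1-3\eps)N)$, so $w' \in \Delta_{N,3\eps}$. Applying condition~\eqref{eqn:good-sample-moments} to $w'$ yields
\[
\sum_{i \in G} w_i \inner{X_i - \mus, y}^2 \ge w_G(1 - \delta_2), \qquad \Bigl|\sum_{i \in G} w_i \inner{X_i - \mus, y}\Bigr| \le w_G \delta \le \delta.
\]
For the bad samples, subtracting the good-sample first-moment bound from the total $\sum_i w_i \inner{X_i - \mus, y} = r$ gives $\bigl|\sum_{i \in B} w_i \inner{X_i - \mus, y}\bigr| \ge r - \delta \ge (1 - 1/c_3) r$, so by Cauchy--Schwarz and $w_B \le \eps/(1-2\eps)$,
\[
\sum_{i \in B} w_i \inner{X_i - \mus, y}^2 \ge \frac{(r - \delta)^2}{w_B} \ge \frac{(1 - 1/c_3)^2 (1-2\eps)}{\eps}\, r^2.
\]

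Combining these,
\[
\sum_i w_i \inner{X_i - \mus, y}^2 - r^2 \ge 1 + \bigl(w_G(1-\delta_2) - 1\bigr) + \Bigl(\tfrac{(1-1/c_3)^2 (1-2\eps)}{\eps} - 1\Bigr) r^2.
\]
The negative slack $w_G(1-\delta_2) - 1$ is on the order of $-O(\eps + \delta_2) = -O(\beta^2)$, whereas the second term contributes at least $\Omega(r^2/\eps) \ge \Omega(c_3^2 \delta^2/\eps) = \Omega(c_3^2 \beta^2)$ since $r \ge c_3 \delta$ and $\delta^2/\eps = \Theta(\beta^2)$. The main technical step is the calibration of the universal constants: choosing $c_3$ sufficiently large (relative to $c_4$, $c_1$, and the implicit constants in the deterministic conditions) ensures that this positive excess comfortably dominates the $O(\beta^2)$ slack and still leaves $c_4 \beta^2$ to spare, giving $\lambda_{\max} \ge 1 + c_4 \beta^2$ as claimed.
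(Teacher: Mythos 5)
Your reduction to $\nu = \hat\mu_w$ is a genuinely different route from the paper's: the identity $\sum_i w_i (X_i-\nu)(X_i-\nu)^\top = \sum_i w_i (X_i-\hat\mu_w)(X_i-\hat\mu_w)^\top + (\hat\mu_w-\nu)(\hat\mu_w-\nu)^\top$ with the second term PSD does let you treat only $\nu=\hat\mu_w$, and this eliminates the paper's case split on whether $\normtwo{\nu-\mus}\ge c_5\beta$ and its appeal to Lemma~\ref{lem:ub-lb-opt} in the far case (the paper instead keeps a general $\nu$, restricts to $\normtwo{\nu-\mus}<c_5\beta$, and works with the quadratic form about $\nu$). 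Your use of Condition~\eqref{eqn:good-sample-moments} through the renormalized weights $w'\in\Delta_{N,3\eps}$ and the Cauchy--Schwarz step on the bad samples are correct as stated (the division by $w_B$ is harmless, since $w_B=0$ would force $r\le\delta$).

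The gap is in the final calibration, and it is a structural cost of centering at $\hat\mu_w$: after subtracting $r^2$, the bad samples alone must recoup it, so you need $\frac{(r-\delta)^2}{w_B}\ge r^2+\Omega(\beta^2)$, i.e.\ the coefficient $\frac{(1-1/c_3)^2(1-2\eps)}{\eps}-1$ must be positive and of order $1/\eps$. This is true only when $\eps$ is bounded away from $1/3$: since $w_B$ can be as large as $\frac{\eps}{1-2\eps}\to 1$ as $\eps\to 1/3$, the coefficient tends to $(1-1/c_3)^2-1<0$, your displayed lower bound becomes vacuous (it even tends to $-\infty$ as $r$ grows), and even in the best case $c_3\to\infty$ the prefactor is only $\frac{1-3\eps}{\eps}$, so the claimed $\Omega(r^2/\eps)\ge\Omega(c_3^2\beta^2)$ cannot be arranged by any choice of universal constants on the whole range $0<\eps<1/3$ that the lemma asserts. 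The paper's accounting avoids having to beat $r^2$ altogether: in its near case it only needs the bad excess $\frac{c_6^2\delta^2}{w_B}$ to dominate an $O(\beta^2)$ slack, which degrades merely by constants when $w_B\le\frac{\eps}{1-2\eps}\le 3\eps$, and the far case is delegated to Lemma~\ref{lem:ub-lb-opt} (whose boundary behavior near $\eps=1/3$ is itself delicate, so you are in good company there, but the paper's structure never produces a sign flip). Your argument is sound, with the calibration you sketch, once $\eps$ is restricted below an absolute constant strictly smaller than $1/3$ (e.g.\ $\eps\le 1/4$); to claim the lemma as stated you would need either that restriction made explicit or a different accounting, e.g.\ the paper's centering at $\nu$, which also credits the good samples' displacement $w_G r^2$ about $\hat\mu_w$ that your split discards.
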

\begin{proof}
Fix any $w \in \Delta_{N, 2\eps}$.
If $\normtwo{\mus - \nu} \ge c_5 \beta$, then because $w$ is feasible and by Lemma~\ref{lem:ub-lb-opt},
\[
\lambda_{\max} \left( \sum_{i=1}^N w_i (X_i - \nu) (X_i - \nu)^\top \right) \ge \OPT_{\nu,2\eps} \ge 1 + 0.9 \normtwo{\mus - \nu}^2 \ge 1 + 0.9 c_5^2 \beta^2 \ge 1 + c_4 \beta^2 \; .
\]
Therefore, for the rest of this proof, we can assume $\normtwo{\mus - \nu} < c_5 \beta$.

We project the samples along the direction of $(\hat \mu_w - \mus)$.
Consider the unit vector $y = (\hat \mu_w - \mus) / \normtwo{\hat \mu_w - \mus}$.
To bound from below the maximum eigenvalue, it is sufficient to show that
\[
y^\top \left( \sum_{i=1}^N w_i (X_i - \nu) (X_i - \nu)^\top \right) y = \sum_{i=1}^N w_i \inner{X_i - \nu, y}^2 \ge 1 + \Omega(\delta^2/\eps) \; .
\]
We first bound from below the contribution of the bad samples by $\Omega(\delta^2/\eps)$.
By triangle inequality, 
\begin{align*}
\abs{\sum_{i\in B} w_i \inner{X_i - \nu, y}}
  & \ge \abs{\sum_{i\in B} w_i \inner{X_i - \mus, y}} - w_B \abs{\inner{\mus - \nu, y}} \\
  & \ge \abs{\sum_{i=1}^N w_i \inner{X_i - \mus, y}} - \abs{\sum_{i\in G} w_i \inner{X_i - \mus, y}} - 2\eps \normtwo{\mus - \nu} \\
  & \ge \normtwo{\hat \mu_w - \mus} - \delta - 2 \eps c_5 \beta \ge (c_3 - 1 - 2 c_5 \frac{\sqrt{\eps}}{c_1}) \delta \ge c_6 \delta \; .
\end{align*}
The last line follows from our choice of $y$, and the good samples satisfy Condition~\eqref{eqn:good-sample-moments}.
By Cauchy-Schwarz,
$
\left(\sum_{i\in B} w_i \inner{X_i - \nu, y}^2\right) \left(\sum_{i\in B} w_i\right) \ge \left(\sum_{i\in B} w_i \inner{X_i - \nu, y} \right)^2 \ge c_6^2 \delta^2.
$
Since $w_B \le 2\eps$, we have $\sum_{i\in B} w_i \inner{X_i - \nu, y}^2 \ge \frac{c_6^2}{2} (\delta^2/\eps)$.

We continue to lower bound the contribution of the good samples to the quadratic form by $1 - O(\delta_2) = 1 - O(\delta^2/\eps)$.
This is because the true covariance matrix is $I$.
By Condition~\eqref{eqn:good-sample-moments},
\begin{align*}
\sum_{i\in G} w_i \inner{X_i - \nu, y}^2
  & = \sum_{i\in G} w_i \left(\inner{X_i - \mus, y}^2 + \inner{\mus - \nu, y}^2 + 2\inner{X_i - \mus, y}\inner{\mus - \nu, y}\right) \\
  & \ge \sum_{i\in G} w_i \inner{X_i - \mus, y}^2 + 2 \inner{\mus - \nu, y} \inner{\sum_{i\in G} w_i (X_i - \mus), y} \\
  & \ge (1 - \delta_2) - 2\delta \normtwo{\mus - \nu} \ge 1 - (1 + \frac{2c_5\beta}{\sqrt{\ln(1/\eps)}}) \delta_2 \ge 1 - c_7 \delta_2 \; .
\end{align*}
Putting the good and bad samples together, we have $\sum_{i=1}^N w_i \inner{X_i - \nu, y}^2 \ge 1 - c_7 \delta_2 + \frac{c_6^2}{2} (\delta^2/\eps) = 1 + (\frac{c_1^2 c_6^2}{2} - c_1 c_7) \beta^2 \ge 1 + c_4 \beta^2$ as needed.

The constants in the proof are given in Appendix~\ref{apx:const}.
\end{proof}

Lemma~\ref{lem:wrong-mean-primal-nosol} guarantees that any good solution to the primal SDP gives a good set of weights.
In other words, whenever we have a solution to the primal SDP whose objective value is at most $1 + O(\beta^2)$, we are done because the weighted empirical mean must be close to the true mean.

\subsection{When Primal SDP Has No Good Solutions}
\label{subsec:dual}
We now deal with the other possibility:  the primal SDP has no good solution.
We will show that, in this case, we can move $\nu$ closer to $\mus$ 
by solving the dual SDP~\eqref{eqn:dual-sdp}, decreasing $\normtwo{\nu - \mus}$ by a constant factor.

Lemma~\ref{lem:ub-lb-opt} states that $\OPT_{\nu, \eps} \approx 1 + \normtwo{\nu-\mus}^2$.
Intuitively, if the dual SDP throws away all the bad samples, then we know that 
$\OPT_{\nu, \eps} \approx\frac{1}{|G|} \sum_{i\in G} (X_i - \nu)^\top M (X_i - \nu)$.
If this quantity also concentrates around its expectation, then
\[
1 + \normtwo{\nu-\mus}^2 \approx \OPT_{\nu, \eps} \approx \expect{X \sim \NN(\mus,I)}{(X - \nu)^\top M (X - \nu)} = \inner{M, I + (\nu-\mus)(\nu-\mus)^\top} \; .
\]
Because $\tr(M) = 1$, we can remove $1$ from both sides and get $\inner{M, (\nu - \mus)(\nu - \mus)^\top} \approx \normtwo{\nu - \mus}^2$.
This condition implies that the top eigenvector of $M$ aligns approximately with $(\nu - \mus)$, which provides a good direction for us to move $\nu$.

The following lemma formalizes this intuition.
Specifically, Lemma~\ref{lem:good-dual-better-nu} shows that despite the error from solving the SDP 
approximately and the errors in the concentration inequalities, 
we can still use the top eigenvector of $M$ to move $\nu$ closer to $\mus$.

\begin{lemma}[Good Dual Solution $\Rightarrow$ Better $\nu$]
\label{lem:good-dual-better-nu}
Fix $0 < \eps < 1/3$ and $\nu \in \R^d$.
Let $\beta = \sqrt{\eps \ln(1/\eps)}$.
Assume we have a solution $M \in \R^{d \times d}$ to the dual SDP~\eqref{eqn:dual-sdp} with parameters $\nu$ and $\eps$, and the objective value of $M$ is at least $\max(1 + 0.9 c_4 \beta^2, (1-\frac{\eps}{10}) \OPT_{\nu, 2\eps})$.
Then, we can efficiently find a vector $\nu' \in \R^d$,
such that $\normtwo{\nu' - \mus} \le \frac{3}{4} \normtwo{\nu - \mus}$.
\end{lemma}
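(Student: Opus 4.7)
The plan is to show that the hypotheses on $M$ force the quadratic form $u^\top M u$ (writing $u := \nu - \mus$ and $r := \normtwo{u}$) to be close to $r^2$; combined with $\tr(M) \le 1$, this forces the top eigenvector $y$ of $M$ to nearly align with $u$, and I would then correct $\nu$ along $y$ using a fast one-dimensional robust mean estimator.

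First, I would upper-bound the dual value $\mathrm{val}(M)$ at $M$ by a concrete feasible weight. Since $|G| = (1-\eps)N$, the uniform distribution on $G$ lies in $\Delta_{N,\eps}$, so by the definition of the dual objective $\mathrm{val}(M) \le \frac{1}{|G|}\sum_{i\in G}(X_i-\nu)^\top M(X_i-\nu)$. Writing $X_i - \nu = (X_i - \mus) - u$ and setting $C := \frac{1}{|G|}\sum_{i\in G}(X_i-\mus)(X_i-\mus)^\top$ and $\bar v := \frac{1}{|G|}\sum_{i\in G}(X_i-\mus)$, the right-hand side expands as $\langle M, C\rangle - 2\bar v^\top M u + u^\top M u$. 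Condition~\eqref{eqn:good-sample-moments} applied to the uniform-on-$G$ weight (which lies in $\Delta_{N,\eps} \subset \Delta_{N,3\eps}$) gives $\normtwo{C-I} \le \delta_2$ and $\normtwo{\bar v} \le \delta$, and using $\tr(M) \le 1$ and $\normtwo{Mu} \le \normtwo{u}$, the right-hand side is at most $(1+\delta_2) + 2\delta r + u^\top M u$. Rearranging,
\[ u^\top M u \;\ge\; \mathrm{val}(M) - (1 + \delta_2) - 2\delta r. \]

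Next I would combine the two lower-bound hypotheses. The hypothesis $\mathrm{val}(M) \ge 1 + 0.9\,c_4 \beta^2$ together with the upper bound $\mathrm{val}(M) \le \OPT_{\nu,\eps} \le 1 + \delta_2 + r^2 + 2\delta r$ from Lemma~\ref{lem:ub-lb-opt} forces $r \ge c_5 \beta$ for some constant $c_5$ that is large whenever $c_4$ is large. Once $r \ge c_5\beta \ge c_2\beta$, the simplified bound $\OPT_{\nu,2\eps} \ge 1 + 0.9 r^2$ of Lemma~\ref{lem:ub-lb-opt} applies; combining with the other hypothesis $\mathrm{val}(M) \ge (1 - \eps/10)\OPT_{\nu,2\eps}$ and the inequality from the previous paragraph yields $u^\top M u \ge 0.9\, r^2$, since the remaining additive terms ($\delta_2$, $2\delta r$, and $\eps/10$) are all $O(\beta^2) \ll r^2$ in this regime. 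Now $\tr(M) \le 1$ converts the quadratic form bound into an eigenvector alignment: writing $M = \sum_i \lambda_i y_i y_i^\top$ with $\lambda_1 \ge \lambda_2 \ge \cdots \ge 0$ and $\sum_i \lambda_i \le 1$, the inequality $u^\top M u \le \lambda_1 \normtwo{u}^2$ gives $\lambda_1 \ge 0.9$ and hence $\lambda_2 \le 1 - \lambda_1 \le 0.1$. Setting $y := y_1$, the estimate $0.9 \normtwo{u}^2 \le \lambda_1 \langle y, u\rangle^2 + \lambda_2(\normtwo{u}^2 - \langle y, u\rangle^2)$ rearranges to $\langle y, u\rangle^2 \ge 0.9\, r^2$, so the component $u^\perp$ of $u$ orthogonal to $y$ satisfies $\normtwo{u^\perp} \le \sqrt{0.1}\, r$.

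Finally, I would project the samples onto $y$, setting $s_i := \langle X_i, y\rangle$; the good one-dimensional samples have unit variance and mean $\langle \mus, y\rangle$, so a linear-time robust $1$-D estimator (for instance, the $\eps$-trimmed mean computed via linear-time selection) returns some $m$ with $|m - \langle \mus, y\rangle| = O(\beta)$. Defining $\nu' := \nu + (m - \langle \nu, y\rangle) y$ and decomposing gives $\nu' - \mus = (m - \langle \mus, y\rangle)\, y + u^\perp$, and therefore
\[ \normtwo{\nu' - \mus}^2 \;\le\; O(\beta^2) + 0.1\, r^2 \;<\; (3/4)^2 r^2, \]
using $r \ge c_5 \beta$ with $c_5$ chosen large. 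The main technical obstacle is cleanly tracking the lower-order terms of order $\beta^2$, $\delta_2$, $2\delta r$, and $\eps/10$ through the cascade of inequalities and choosing the universal constants from Appendix~\ref{apx:const} (in particular $c_4$ and $c_5$) compatibly so that the stated contraction factor $3/4$, rather than a weaker one, is achieved.
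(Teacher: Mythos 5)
Your derivation up to the eigenvector--alignment step is essentially the paper's own: you bound the dual value by the uniform average over the good samples, expand with Condition~\eqref{eqn:good-sample-moments}, and combine with Lemma~\ref{lem:ub-lb-opt} to force $r=\normtwo{\nu-\mus}\ge \Omega(\beta)$ and $\inner{M,(\nu-\mus)(\nu-\mus)^\top}\ge \Omega(r^2)$, whence the top eigenvector $y$ of $M$ aligns with $\nu-\mus$. (Your constants are a bit optimistic: the paper only establishes $\inner{M,uu^\top}\ge \tfrac34 r^2$ once the lower-order terms $\delta_2$, $2\delta r$, $\eps/10$ are accounted for, and your ``$0.9$'' rearrangement actually yields $\inner{y,u}^2\ge \tfrac{8}{9}r^2$ rather than $0.9r^2$; none of this is fatal since your final step has ample slack.) Where you genuinely diverge is the last step. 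The paper never estimates $\inner{\mus,y}$: it estimates the step length $r'$ from the (approximately computed) SDP value via Lemma~\ref{lem:ub-lb-opt}, sets $\nu'=\nu\pm r'v_1$, and resolves the sign ambiguity of the eigenvector by solving the SDP at both candidate points and keeping the one with the smaller value. Your alternative --- robustly estimating the one-dimensional mean of the projections $\inner{X_i,y}$ and snapping the $y$-coordinate of $\nu$ to it --- handles the step size and the sign simultaneously and avoids the two extra SDP solves, at the price of importing a robustness fact about one-dimensional trimmed means that the paper's route does not need (the paper's route uses only facts it has already proved).

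That imported fact is where your write-up has a real gap. You assert that an $\eps$-trimmed mean of the projections returns $m$ with $\abs{m-\inner{\mus,y}}=O(\beta)$, citing it as standard. But $y$ is data-dependent (it is the top eigenvector of a matrix computed from all samples, corrupted ones included), so a with-high-probability guarantee for a fixed direction cannot be invoked; you need a bound that holds uniformly over unit directions, i.e., one derived from the deterministic conditions~\eqref{eqn:good-sample-moments}. Such a bound does hold: along any unit direction the weighted second-moment bound implies that at most an $O(\eps)$-fraction of good projections deviate from $\inner{\mus,y}$ by more than $O(1/\sqrt{\eps})$, so if one trims a sufficiently large $O(\eps)$-fraction from each tail, every surviving corrupted projection is sandwiched between good-sample quantiles and hence deviates by $O(1/\sqrt{\eps})$, contributing a shift of $O(\eps\cdot 1/\sqrt{\eps})=O(\sqrt{\eps})$ to the average; the retained good samples form a weight vector in $\Delta_{N,O(\eps)}$, so their average is within $O(\delta)$ of $\inner{\mus,y}$ by the first part of~\eqref{eqn:good-sample-moments} (stated for a correspondingly larger constant fraction, which holds by the same concentration argument). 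This gives $\abs{m-\inner{\mus,y}}=O(\delta+\sqrt{\eps})=O(\beta)$ uniformly, which is exactly what your final inequality needs; spelling this out --- and noting, as in Remark~\ref{rem:pm}, that the slack tolerates a constant-factor-approximate top eigenvector from the power method --- would complete your argument. Alternatively, you could simply fall back on the paper's final step (estimate $r'$ from the SDP value and disambiguate $\pm v_1$ by comparing SDP values), which requires no new estimator.
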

\begin{proof}
Because $M$ is a feasible solution to the dual SDP~\eqref{eqn:dual-sdp} with parameters $\nu$ and $\eps$, we know that $\OPT_{\nu,\eps} \ge 1 + 0.9 c_4 \beta^2$.
When $\OPT_{\nu, \eps} \ge 1 + 0.9 c_4 \beta^2$, Lemma~\ref{lem:ub-lb-opt} implies that $\normtwo{\mus - \nu} \ge c_2 \beta$ and $(1-\frac{\eps}{10})\OPT_{\nu, 2\eps} \ge 1 + 0.85 \normtwo{\mus - \nu}^2$.
Since the objective value is the average of the smallest $(1-\eps)N$ entries of $(X_i - \nu)^\top M (X_i - \nu)$, and one way to choose $(1-\eps)N$ entries is to focus on the good samples,
\[
1 + 0.85 \normtwo{\mus - \nu}^2
   \le \left(1-\frac{\eps}{10}\right) \OPT_{\nu, 2\eps}
   \le \left(1-\frac{\eps}{10}\right) \OPT_{\nu, \eps}
   \le \frac{1}{|G|}\sum_{i\in G} (X_i - \nu)^\top M (X_i - \nu) \; .
\]

We know $M \succeq 0$ and $\tr(M) = 1$.
Without loss of generality, we can assume $M$ is symmetric.
Using Condition~\eqref{eqn:good-sample-moments}, we can prove that $\inner{M, (\mus - \nu)(\mus - \nu)^\top} \ge \frac{3}{4} \normtwo{\mus-\nu}^2$:
\begin{align*}
1 + 0.85 \normtwo{\mus - \nu}^2
 & \le \frac{1}{|G|}\sum_{i\in G} (X_i - \nu)^\top M (X_i - \nu) \\
 & = \frac{1}{|G|}\sum_{i\in G} \inner{M, (X_i - \mus)(X_i - \mus)^\top + 2(X_i - \mus)(\mus-\nu) + (\mus-\nu)(\mus-\nu)^\top} \\
 & \le 1 + \delta_2 + 2\delta \normtwo{\mus-\nu} + \inner{M, (\mus-\nu)(\mus-\nu)^\top} \\
 & \le 1 + 0.1 \normtwo{\mus-\nu}^2 + \inner{M, (\mus-\nu)(\mus-\nu)^\top} \; .
\end{align*}

We will continue to show that the top eigenvector of $M$ aligns with $(\nu - \mus)$.
Let $\lambda_1 \ge \lambda_2 \ge \ldots \ge \lambda_d \ge 0$ denote the eigenvalues of $M$, 
and let $v_1, \ldots, v_d$ denote the corresponding eigenvectors.
The conditions on $M$ implies that $\sum_{i=1}^d \lambda_d = 1$.
We decompose $(\mus - \nu)$ and write it as $\mus - \nu = \sum_{i=1}^d \alpha_i v_i$ where $\sum_{i=1}^d \alpha_i^2 = \normtwo{\mus - \nu}^2$.
Using these decompositions, we can rewrite $\inner{M, (\mus - \nu)(\mus - \nu)^\top} = \sum_{i=1}^d \lambda_i \alpha_i^2$.

First observe that $\lambda_1 \ge \frac{3}{4}$, because $\lambda_1 \sum_{i} \alpha_i^2 \ge \sum_{i} \lambda_i \alpha_i^2 \ge \frac{3}{4} \normtwo{\mus - \nu}^2 = \frac{3}{4}\sum_{i} \alpha_i^2$.
Moreover, because $\frac{3}{4} \sum_{i} \alpha_i^2 \le \sum_{i} \lambda_i \alpha_i^2 \le \lambda_1 \alpha_1^2 + (1-\lambda_1)(1-\alpha_1^2) \le \frac{3}{4} \alpha_1^2 + \frac{1}{4} \sum_i \alpha_i^2$, we know that $\inner{v_1 v_1^\top, (\mus - \nu)(\mus - \nu)^\top} = \alpha_1^2 \ge \frac{2}{3} \sum_{i} \alpha_i^2$.
Thus, we have a unit vector $v_1 \in \R^d$ with $\inner{v_1, \mus - \nu} = \alpha_1 \ge \sqrt{2/3} \normtwo{\mus - \nu}$, so the angle between $v_1$ and $\mus - \nu$ is at most $\theta \le \cos^{-1}(\sqrt{2/3})$.

\begin{figure}[h]
\centering
\includegraphics[width=0.6\linewidth]{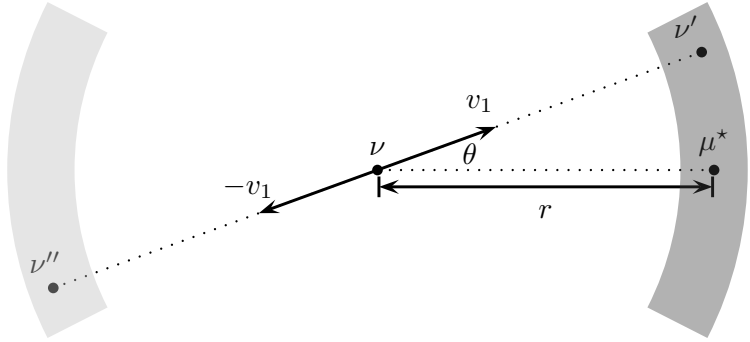}
\caption{An illustration of the final part of the proof of Lemma~\ref{lem:good-dual-better-nu}. Assume we can find a unit vector $v_1$ that approximately aligns with $(\mus - \nu)$, and we can estimate $r \approx \normtwo{\mus-\nu}$. Then, the point $\nu' = \nu + r' v_1$ lies in the highlighted region on the right, which is closer to $\mus$. Moreover, if only know $\pm v_1$, we can distinguish $\nu'$ and $\nu''$ by comparing the optimal value of their SDPs.} 
\label{fig:move-nu}
\end{figure}

Finally, if we know the exact value of $r = \normtwo{\mus-\nu}$, we can update $\nu$ to $\nu' = \nu + r v_1$.
This corresponds to moving $\nu$ to a point that is on a circle of radius $r$ centered at $\nu$ (see Figure~\ref{fig:move-nu}).
The distance between $\nu'$ and $\mus$ is maximized when $\theta$ is the largest, and this distance is at most $2r\sin(\theta/2) < \frac{2}{3} r$.
However, in reality, we do not know $r = \normtwo{\mus-\nu}$, and we can only estimate it from the value of $\OPT_{\nu,2\eps}$.
Because we are solving the SDPs to precision $(1 \pm \frac{\eps}{10})$, by Lemma~\ref{lem:ub-lb-opt}, 
we can estimate $r'$ such that $0.85 r^2 \le (r')^2 \le 1.15 r^2$.
By triangle inequality, the point $\nu' = \nu + r' v_1$ is at most $\frac{2}{3} r + |r'-r| < \frac{3}{4} r$ away from $\mus$.

One technical issue is that the top eigenvector of $M$ can be $\pm v_1$, so we have two possible directions that are opposite of each other.
Let $\nu' = \nu + r' v_1$ be the point closer to $\mus$, and $\nu'' = \nu - r' v_1$ be the point farther from $\mus$.
We can distinguish $\nu'$ and $\nu''$ by solving the SDP~\eqref{eqn:primal-sdp} with parameters $\nu'$ and $\nu''$ respectively, and the point with smaller optimal value is $\nu'$.
This is because $\nu''$ moves at least $r' \ge \sqrt{0.9} r$ in the reverse direction, so the distance between $\nu''$ and $\mus$ is at least $\sqrt{(r + r' \cos \theta)^2 + (r' \sin \theta)^2} > 1.8 r > c_2 \beta$.
By Lemma~\ref{lem:ub-lb-opt}, $\OPT_{\nu'',2\eps} \ge 1 + 0.9 \cdot (1.8r)^2 > 1 + 2r^2$, and $\OPT_{\nu',2\eps} \le 1 + 1.1 r^2$.
Again because $r \ge c_2 \beta$, this gap is large enough for separating them if we approximate both $\OPT_{\nu', 2\eps}$ and $\OPT_{\nu'', 2\eps}$ to a factor of $(1 \pm \frac{\eps}{10})$.

The constants in the proof are given in Appendix~\ref{apx:const}.
\end{proof}

\subsection{Proof of Theorem~\ref{thm:subg-mean}}
\label{subsec:mainproof}
We are now ready to prove Theorem~\ref{thm:subg-mean}. 
This is mostly done by applying Lemmas~\ref{lem:wrong-mean-primal-nosol} 
and~\ref{lem:good-dual-better-nu} in appropriate scenarios. 
Because of the geometric improvement in Lemma~\ref{lem:good-dual-better-nu}, 
we will apply it at most a logarithmic number of times, 
and then the algorithm can terminate in the case of Lemma~\ref{lem:wrong-mean-primal-nosol}.

\begin{proof}[{\bf Proof of Theorem~\ref{thm:subg-mean}} (Correctness and Runtime of Algorithm~\ref{alg:pc-sdp})]
Let $\tau = 1/30$.
When $N = \Omega(d/\delta^2)$, Condition~\eqref{eqn:good-sample-moments} holds for the good samples with probability at least $1-\tau$, 
which is required in the proofs of Lemmas~\ref{lem:ub-lb-opt},~\ref{lem:wrong-mean-primal-nosol},~and~\ref{lem:good-dual-better-nu}.

We will use the empirical coordinate-wise median as our initial guess $\nu$.
It is folklore that with high probability, the coordinate-wise median is within $O(\eps \sqrt{d})$ of the true mean $\mus$.
In Algorithm~\ref{alg:pc-sdp}, whenever we update $\nu$ by Lemma~\ref{lem:good-dual-better-nu}, we move it closer to $\mus$.
Therefore, throughout the algorithm,  the condition $\normtwo{\nu - \mus} \le O(\eps \sqrt{d})$ always holds, which is required by Proposition~\ref{prop:approx-sdp}.

The correctness of Algorithm~(\ref{alg:pc-sdp}) follows immediately from 
Lemmas~\ref{lem:wrong-mean-primal-nosol},~\ref{lem:good-dual-better-nu}, and Proposition~\ref{prop:approx-sdp}.
In each iteration, the algorithm either finds a good solution $w \in \R^N$ to the primal SDP~\eqref{eqn:primal-sdp} and terminates, 
in which case Lemma~\ref{lem:wrong-mean-primal-nosol} guarantees that the weighted empirical mean $\hat \mu_w$ is close to $\mus$;
or the algorithm finds a good solution $M \in \R^{d\times d}$ to the dual SDP~\eqref{eqn:dual-sdp}, 
and it will use the top eigenvector of $M$ to move the current guess $\nu$ closer to $\mus$ 
by a constant factor, as in Lemma~\ref{lem:good-dual-better-nu}.
The failing probability is at most $3\tau = 1/10$ by a union bound over three bad events: {\em (i)} the good samples do not satisfy Condition~\eqref{eqn:good-sample-moments}, {\em (ii)} the coordinate-wise median is too far away from $\mus$, and {\em (iii)} the SDP solver is not able to produce an approximate solution at some point.

We now analyze the running time of Algorithm~\ref{alg:pc-sdp}.
The coordinate-wise median can be computed in time $O(Nd)$.
Whenever the primal SDP~\eqref{eqn:primal-sdp} has a good solution, the algorithm terminates.
The initial choice of $\nu$ satisfies that $\normtwo{\nu - \mus} \le O(\eps \sqrt{d})$, 
and Lemma~\ref{lem:ub-lb-opt} implies that we have a good primal solution if $\normtwo{\mus - \nu} \le O(\beta)$.
Because every time we move $\nu$ as in Lemma~\ref{lem:good-dual-better-nu}, the distance between $\nu$ and $\mus$ 
decreases by a constant factor, we can move $\nu$ 
at most $O(\log(\eps \sqrt{d}/\beta)) = O(\log d)$ times.
For each guess $\nu \in \R^d$, we invoke Proposition~\ref{prop:approx-sdp} to either obtain a good primal or a good dual solution.
We repeat every use of Proposition~\ref{prop:approx-sdp} $O(\log \log d)$ times, so that the failure 
probability is at most $\tau = 1/30$ by a union bound over the iterations.
We will use the power method to compute the top eigenvector of $M$, which takes time $O(\log d \cdot N d \log^2 N/\eps^3)$~(see Remark~\ref{rem:pm}).
Thus, every loop of Algorithm~\ref{alg:pc-sdp} takes time $O(\log \log d) \cdot (\tilde O(N d / \eps^6) + \tilde O(N d /\eps^3)) = \tilde O(N d / \eps^6)$.
Therefore, the overall running time is 
\[ O(Nd) + O(\log d) \cdot \tilde O(N d / \eps^6) = \tilde O(N d  / \eps^6) \;. \qedhere \]
\end{proof}

\begin{remark} \label{rem:pm}
{\em Recall that the number of iterations of the power method is $O(\log d / \eps')$ if we want to compute a $(1-\eps')$-approximate largest eigenvector.
Due to the slack in the geometry analysis of Lemma~\ref{lem:good-dual-better-nu}, we can set $\eps'$ to a constant (say $\eps' = 0.01$).
In addition, the matrix $M$ is given implicitly by the positive SDP solver (e.g.,~\cite{PengTZ16}) 
as the sum of matrix exponentials $M = \frac{1}{T}\sum_{t=1}^T \frac{W_t}{\tr(W_t)}$, where $T = O(\log^2 N / \eps^3)$ is the number of iterations 
of the positive SDP solver and $W_t = \exp(\sum_{i=1}^N x^t_i A_i A_i^\top)$ for some $x^t \in \R^N$.
To evaluate $M v$ in the power method, we multiply $v$ with each $W_t$ separately, where we use a degree $
O(\log(1/\eps'))$ matrix polynomial of $\Psi_t = \sum_{i=1}^N x^t_i A_i A_i^\top$ to approximate $W_t = \exp(\Psi_t)$.
It takes time $O(N d)$ to compute $\Psi_t v$, and therefore it takes time $O(T N d) = O(N d \log^2 N / \eps^3)$ to evaluate $M v$.}
\end{remark}


\section{Solving Primal/Dual SDPs in Nearly-Linear Time}
\label{sec:sdp}
By combining Lemmas~\ref{lem:wrong-mean-primal-nosol}~and~\ref{lem:good-dual-better-nu} from Section~\ref{sec:mean}, 
we know that we can make progress by either finding any solution to the primal SDP~\eqref{eqn:primal-sdp} 
with objective value at most $1 + c_4 \beta^2$, or by finding an approximately optimal solution to the dual 
SDP~\eqref{eqn:dual-sdp} whose objective value is at least $1 + \frac{9}{10} c_4 \beta^2$.
This section is dedicated to proving Proposition~\ref{prop:approx-sdp}, 
which shows that this can be done in time $\tilde O(N d) / \poly(\eps)$.

\begin{proposition}
\label{prop:approx-sdp}
Fix $0 < \eps < 1/3$, and $\nu \in \R^d$ with $\normtwo{\nu - \mus} \le O(\eps \sqrt{d})$.
Let $\beta = \sqrt{\eps \ln(1/\eps)}$.
We can compute in time $\tilde O(N d/ \eps^6)$, with probability at least $9/10$, either
\begin{enumerate}
\item A solution $w \in \R^{N}$ to the primal SDP~\eqref{eqn:primal-sdp} with parameters $(\nu, 2\eps)$, such that the objective value of $w$ is at most $1 + c_4\beta^2$; or
\item A solution $M \in \R^{d \times d}$ to the dual SDP~\eqref{eqn:dual-sdp} with parameter $(\nu, \eps)$, such that the objective value of $M$ is at least $\max(1 + \frac{9}{10}c_4 \beta^2, (1-\frac{\eps}{10}) \OPT_{\nu,2\eps})$.
\end{enumerate}
\end{proposition}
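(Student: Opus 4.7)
The plan is to cast both~\eqref{eqn:primal-sdp} and~\eqref{eqn:dual-sdp} as two sides of a single minimax SDP and solve it with a nearly-linear-time matrix-multiplicative-weights solver~\cite{AllenLO16,PengTZ16}, which returns approximate primal and dual optima simultaneously. Writing $A_i := X_i - \nu$, by strong duality the shared optimum $\OPT_{\nu,2\eps}$ equals
$$\min_{w \in \Delta_{N,2\eps}}\; \max_{M \succeq 0,\; \tr(M)\leq 1}\; \Big\langle M,\ \sum_{i=1}^N w_i A_i A_i^\top\Big\rangle\;.$$
To put this into standard mixed packing--covering form, we use $\sum_i w_i A_i A_i^\top \preceq \lambda I$ and $w_i \leq 1/((1-2\eps)N)$ as packing constraints, and $\sum_i w_i \geq 1$ as a covering constraint, so that any feasible $w$ can be clipped and renormalized to a point of $\Delta_{N,2\eps}$. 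Either via direct optimization or via a logarithmic bisection on $\lambda$, a single call to the solver at precision $\eps/20$ returns some $\tilde w$ satisfying the packing constraints up to a $(1+O(\eps))$ factor, together with an implicitly represented $M \succeq 0$ with $\tr(M) \leq 1$, such that the primal value of $\tilde w$ is at most $(1 + \eps/20)\OPT_{\nu,2\eps}$ and the dual value of $M$ is at least $(1 - \eps/20)\OPT_{\nu,2\eps}$.

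For the running time, each solver iteration is dominated by computations of the form $\Psi_t v = \sum_i w^{(t)}_i A_i (A_i^\top v)$, which take $O(Nd)$ time because each $A_i A_i^\top$ is rank one. With $\tilde O(1/\eps^3)$ iterations in the solver and the outer power iteration used to extract the top eigenvector of the implicit $M$ (see Remark~\ref{rem:pm}), the total cost is $\tilde O(Nd)/\eps^6$. The implicit $M = \tfrac{1}{T}\sum_t W_t/\tr(W_t)$ is handled via truncated-polynomial approximations of the matrix exponentials, contributing only polylogarithmic overhead.

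After clipping $\tilde w$ back into $\Delta_{N,2\eps}$, the dichotomy reduces to a case split on whether the primal value of $\tilde w$ meets the threshold $1 + c_4\beta^2$. If it does, we return $\tilde w$, satisfying Case~1. Otherwise the primal value exceeds $1 + c_4\beta^2$, so $\OPT_{\nu,2\eps} \geq (1+c_4\beta^2)/(1+\eps/20)$, and then the dual value of $M$ is at least $(1-\eps/20)\OPT_{\nu,2\eps} \geq (1-\eps/10)(1+c_4\beta^2) \geq 1 + \tfrac{9}{10} c_4\beta^2$, where the slack $\Theta(\eps)$ is absorbed into $\Theta(c_4\beta^2) = \Theta(c_4 \eps \log(1/\eps))$ for small enough $\eps$. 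The bound $(1-\eps/10)\OPT_{\nu,2\eps}$ in the second part of the $\max$ is satisfied a fortiori. Finally, because $\Delta_{N,\eps}\subset \Delta_{N,2\eps}$, the same $M$ has at least as large a dual objective under parameter $\eps$, fulfilling Case~2.

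The main obstacle is bookkeeping around the $O(\eps)$ slack in the solver's output. On the primal side, we must verify that clipping $\tilde w$ coordinate-wise to $1/((1-2\eps)N)$ and renormalizing to $\ell_1$-norm $1$ inflates $\sum_i w_i A_i A_i^\top$ by at most a $(1+O(\eps))$ factor; this uses the assumption $\normtwo{\nu-\mus}\leq O(\eps\sqrt d)$ together with Condition~\eqref{eqn:good-pruning} to bound $\normtwo{A_i}$ uniformly on the good samples, combined with a cheap preprocessing pass that discards samples whose norm is grossly larger than $\sqrt{d\log N}$. On the dual side, we never form $M$ explicitly: the top-eigenvector computation required by Lemma~\ref{lem:good-dual-better-nu} is carried out via power iteration on the implicit representation, using only a constant number of iterations since a $(1-\Theta(1))$-approximate top eigenvector suffices for the geometric argument in that lemma.
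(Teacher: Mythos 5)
Your high-level strategy is the same as the paper's: reduce the primal/dual SDPs~\eqref{eqn:primal-sdp},~\eqref{eqn:dual-sdp} to a family of positive SDPs indexed by an auxiliary scalar (your $\lambda$, the paper's $\rho = 1/\lambda$), use the solvers of~\cite{AllenLO16,PengTZ16}, and locate the right scalar via a logarithmic bisection. However, there are two genuine gaps.

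First, you invoke a ``mixed packing--covering'' SDP solver. The cited solvers handle \emph{pure} packing and covering SDPs, not mixed ones with both $\preceq$ and $\geq$ constraints present simultaneously. The paper's reformulation via $\rho$ (SDPs~\eqref{eqn:primal-standard} and~\eqref{eqn:dual-standard}) is precisely the device that turns the problem into a pure packing SDP and its covering dual: the covering constraint $\sum_i w_i \geq 1$ is replaced by the \emph{objective} $\max \mone^\top w$. Your ``direct optimization'' route is therefore not available, and while you do mention bisection on $\lambda$ as a fallback, you do not work out what the inner subproblem at fixed $\lambda$ looks like, nor why the bisection is correct (monotonicity and continuity of the optimum, a priori bounds on the relevant range) or terminates in $O(\log(d/\eps))$ steps. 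The hypothesis $\normtwo{\nu-\mus}\le O(\eps\sqrt{d})$ is used by the paper precisely to bound the range of $\rho$ via Lemma~\ref{lem:ub-lb-opt}, not for anything sample-norm related.

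Second, your claim that turning the solver's output $\tilde w$ into a valid element of $\Delta_{N,2\eps}$ requires clipping plus Condition~\eqref{eqn:good-pruning} and a preprocessing pass is wrong. The paper's Lemma~\ref{lem:primal-dual-convert-back} shows no clipping or sample-norm bound is needed: the packing constraint on the bottom-right block of $A_i$ already enforces $w'_i \leq \frac{1}{(1-\eps)N}$, and dividing by $\normone{w'} \geq 1-\eps/10$ then gives $w_i \leq \frac{1}{(1-\eps)(1-\eps/10)N} \leq \frac{1}{(1-2\eps)N}$, so $w \in \Delta_{N,2\eps}$ automatically. The inflation of the objective is by a factor $\frac{1}{\normone{w'}} \leq \frac{1}{1-\eps/10}$, purely a scaling argument. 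Introducing~\eqref{eqn:good-pruning} here is both unnecessary and suggests the conversion mechanics weren't thought through.

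A smaller issue: your runtime accounting presents the $\eps^{-6}$ as roughly the product of $\eps^{-3}$ solver iterations and an $\eps^{-3}$ per-iteration cost; in fact the $\tilde O(Nd/\eps^6)$ comes in one piece from the solver of~\cite{PengTZ16} (Lemma~\ref{lem:ptz16}), and the power-method cost $\tilde O(Nd/\eps^3)$ is an additive term that is subsumed.
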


Previously, nearly-linear time SDP solvers were developed for packing/covering SDPs \cite{AllenLO16,PengTZ16}. 
At a high level, we first relate SDPs~\eqref{eqn:primal-sdp},~\eqref{eqn:dual-sdp} 
with a pair of packing/covering SDPs~\eqref{eqn:primal-standard},~\eqref{eqn:dual-standard}, 
where we switch the objective function with some constraint and introduce an additional parameter $\rho > 0$.
Next, we show that to prove Proposition~\ref{prop:approx-sdp}, 
it is sufficient to solve SDPs~\eqref{eqn:primal-standard},~\eqref{eqn:dual-standard} 
approximately for the correct value of $\rho$,
and moreover, we can run binary search to find a suitable $\rho$. 
Finally, in Section~\ref{sec:sdp-solver}, we show that our packing/covering 
SDPs~\eqref{eqn:primal-standard},~\eqref{eqn:dual-standard} can be solved 
in time $\tilde O(N d / \eps^6)$. Note that these running times (specifically, the dependence on $\eps$) 
can be improved if better packing/covering SDP solvers are discovered.
For example, \cite{AllenLO16} mentioned the possibility of achieving a bound of $\tilde O(N d / \eps^5)$ 
by combining their approach and the techniques from~\cite{WangMMR15}.

Consider the following packing SDP~\eqref{eqn:primal-standard} and its dual covering 
SDP~\eqref{eqn:dual-standard} with parameters $(\nu,\eps,\rho)$:

\begin{lp}
\label{eqn:primal-standard}
\maxi{\mone^\top w}
\st \con{w_i \ge 0, \sum_{i=1}^N w_i A_i \preceq I}
\end{lp}
\begin{lp}
\label{eqn:dual-standard}
\mini{\tr(M') + \normone{y'}}
\st \con{\rho X_i^\top M' X_i + (1-\eps)N y_i \ge 1, M' \succeq 0, y' \ge 0 \;,}  
\end{lp}
where each $A_i \in \R^{(d+N)\times(d+N)}$ is a PSD matrix given by\footnote{Recall that $X_i \in \R^{d \times 1}$ is the $i$-th sample, and $e_i \in \R^{N \times 1}$ is the $i$-th standard basis vector.}
\begin{align*}
A_i & = \left[ \begin{array}{cc}
\rho (X_i-\nu) (X_i-\nu)^\top & 0 \\ 0 & (1-\eps)N \cdot e_i e_i^\top
\end{array} \right] \; .
\end{align*}

We will first show that the solutions of \eqref{eqn:primal-standard} and \eqref{eqn:dual-standard} 
are closely related to solutions of our original SDPs \eqref{eqn:primal-sdp} and \eqref{eqn:dual-sdp}.
Formally, the following lemma shows that if we (approximately) solve the packing/covering 
SDPs~\eqref{eqn:primal-standard},~\eqref{eqn:dual-standard} for some value of $\rho > 0$ 
and the resulting objective values are close to $1$, then we can translate these solutions back 
to obtain solutions for SDPs~\eqref{eqn:primal-sdp}~\eqref{eqn:dual-sdp} with objective value roughly $1/\rho$.

\begin{lemma}
\label{lem:primal-dual-convert-back}
Fix $\nu \in \R^d, 0 < \eps < 1/3$, and $\rho > 0$.
If we have a solution $w'$ of SDP~\eqref{eqn:primal-standard} with parameters $(\nu, \eps)$ such that $\normone{w'} \ge 1 - \frac{\eps}{10}$, then we can construct a solution $w$ of SDP~\eqref{eqn:primal-sdp} with parameters $(\nu, 2\eps)$ whose objective value is at most $\frac{1}{\rho (1-\eps/10)}$.
If we have a solution $(M', y')$ of SDP~\eqref{eqn:dual-standard} with parameters $(\nu, \eps)$ such that $\tr(M') + \normone{y'} \le 1$, then we can construct a solution $M$ of SDP~\eqref{eqn:dual-sdp} with parameters $(\nu, \eps)$ whose objective value is at least $1/\rho$.
\end{lemma}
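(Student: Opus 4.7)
The plan is to observe that the auxiliary packing/covering SDPs~\eqref{eqn:primal-standard},~\eqref{eqn:dual-standard} are essentially the Lagrangian/epigraph reformulations of~\eqref{eqn:primal-sdp},~\eqref{eqn:dual-sdp} with the objective and the spectral/average constraint swapped and a fixed scaling parameter $\rho$ inserted. Under this correspondence, a feasible near-optimal solution to the packing (resp.\ covering) SDP can be rescaled to a feasible solution of the original primal (resp.\ dual) SDP whose objective value is controlled by $1/\rho$.

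For the primal direction, first I would decode the block-diagonal constraint $\sum_i w_i' A_i \preceq I_{d+N}$ into its two components: $\rho \sum_i w_i' (X_i-\nu)(X_i-\nu)^\top \preceq I_d$ and $(1-\eps)N\, w_i' \leq 1$ for each $i$. Then set $w = w'/\normone{w'}$. The first component implies $\lambda_{\max}\!\bigl(\sum_i w_i(X_i-\nu)(X_i-\nu)^\top\bigr) \leq \frac{1}{\rho\normone{w'}} \leq \frac{1}{\rho(1-\eps/10)}$, giving the target objective value. The second component, together with $\normone{w'} \geq 1-\eps/10$, yields $w_i \leq \frac{1}{(1-\eps)N(1-\eps/10)} \leq \frac{1}{(1-2\eps)N}$ since $(1-\eps)(1-\eps/10) \geq 1-2\eps$, so $w \in \Delta_{N,2\eps}$, as required.

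For the dual direction, I would set $M = M'/\tr(M')$; this is PSD with trace one, hence feasible for~\eqref{eqn:dual-sdp}. The covering constraint gives the pointwise lower bound $(X_i-\nu)^\top M'(X_i-\nu) \geq \rho^{-1}(1-(1-\eps)N\,y_i')$. Averaging this bound over any $(1-\eps)N$ indices, and using that the sum of the $(1-\eps)N$ \emph{largest} entries of the nonnegative vector $y'$ is bounded by $\normone{y'}$, I get that the average of the smallest $(1-\eps)$-fraction of $(X_i-\nu)^\top M'(X_i-\nu)$ is at least $\rho^{-1}(1-\normone{y'})$. Rescaling by $1/\tr(M')$ and using $\tr(M')+\normone{y'}\leq 1$, i.e.\ $(1-\normone{y'})/\tr(M') \geq 1$, yields an objective value $\geq 1/\rho$ for $M$.

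There isn't really a deep obstacle here: the argument is just a careful tracking of constants through a rescaling. The one subtlety worth double-checking is the primal slack --- we can only guarantee $w \in \Delta_{N,2\eps}$ (not $\Delta_{N,\eps}$), which is why SDP~\eqref{eqn:primal-sdp} must be invoked with parameter $2\eps$ in the lemma statement, matching the asymmetric $(\eps,2\eps)$ usage in Lemmas~\ref{lem:wrong-mean-primal-nosol} and~\ref{lem:good-dual-better-nu}. The second subtlety is the rearrangement step for the dual, where it is essential that $y' \geq 0$ so that truncating to the largest $(1-\eps)N$ entries only loses a factor bounded by $\normone{y'}$.
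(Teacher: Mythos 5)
Your proof is correct and follows essentially the same route as the paper's: normalize $w'$ to get the primal bound (with the $\Delta_{N,2\eps}$ feasibility check you spell out), and rescale $M'$ by $1/\tr(M')$ using the covering constraints together with $\tr(M') + \normone{y'} \le 1$ for the dual bound, your version simply bounding the trimmed mean directly rather than passing through the paper's equivalent $(M,y,z)$ reformulation of SDP~\eqref{eqn:dual-sdp}. The only detail to add is that $\tr(M') > 0$ so the rescaling is legitimate: if $M' = 0$, the covering constraints force $\normone{y'} \ge \frac{1}{1-\eps} > 1$, contradicting $\tr(M') + \normone{y'} \le 1$ --- a point the paper notes explicitly.
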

\begin{proof}
We first construct a solution $w$ to SDP~\eqref{eqn:primal-sdp} with parameters $(\nu, 2\eps)$ given $w'$.
Let $w = \frac{w'}{\normone{w'}}$.
Since $\normone{w'} \ge 1 - \frac{\eps}{10}$, we know that $w \in \Delta_{N, 2\eps}$ is feasible for SDP~\eqref{eqn:primal-sdp}.
SDP~\eqref{eqn:primal-standard} guarantees that $\rho \sum_i w'_i (X_i-\nu) (X_i-\nu)^\top \preceq I$, so the objective value of SDP~\eqref{eqn:primal-sdp} at $w$ satisfies $\lambda_{\max}\left(\sum_i w_i (X_i-\nu) (X_i-\nu)^\top\right) \le \frac{1}{\rho (1-\eps/10)}$.

Next, we will construct a solution $M$ for the original dual SDP~\eqref{eqn:dual-sdp} with parameters $(\nu, \eps)$ given $(M', y')$.  
We will work with the following SDP that is equivalent to the dual SDP~\eqref{eqn:dual-sdp}:
\begin{lp*}
\maxi{z - \frac{\sum_{i=1}^N y_i}{(1-\eps)N}}
\st \con{M \succeq 0, \tr(M) \le 1, y \ge 0, (X_i - \nu)^\top M (X_i - \nu) + y_i \ge z}
\end{lp*}

Given a dual solution $M \in \R^{d \times d}$, it is easy to find the optimal values of $(y,z)$ variables: 
$z$ should be at the $(1-\eps)$-th quantile for $(X_i - \nu)^\top M (X_i - \nu)$ 
and $y_i = \max\{z-(X_i - \nu)^\top M (X_i - \nu), 0\}$.
Under these choices, we recover the objective value of SDP~\eqref{eqn:dual-sdp}, which is the mean of the smallest $(1-\eps)N$ entries of $\left((X_i - \nu)^\top M (X_i - \nu)\right)_{i=1}^N$.

Let $M = \frac{M'}{\tr(M')}$, $y = \frac{(1-\eps)N}{\rho \tr(M')} y'$, and $z = \frac{1}{\rho \tr(M')}$.
Note that $y$ is well-defined, because we always have $M' \neq \bf{0}$, otherwise the objective value is at least $1/(1-\eps) > 1$.
Note that $(M, y, z)$ is a feasible solution to SDP~\eqref{eqn:dual-sdp}: 
by the definition of $(M, y, z)$, the constraint $\rho X_i^\top M' X_i + (1-\eps)N y'_i \ge 1$ translates to $\rho \tr(M') X_i^\top M X_i + \rho \tr(M') y_i \ge 1 = \rho \tr(M') z$, which is exactly $X_i^\top M X_i + y_i \ge z$.
The objective value of $(M, y, z)$ is $z - \frac{\normone{y}}{(1-\eps)N} = \frac{1 - \normone{y'}}{\rho \tr(M')} = 1/\rho$.
\end{proof}

We will use binary search to find a suitable $\rho > 0$, solve the packing/covering SDPs approximately, 
and then translate the solutions back using Lemma~\ref{lem:primal-dual-convert-back}.
The translated solutions will satisfy the conditions of Proposition~\ref{prop:approx-sdp}. 
To make sure a suitable $\rho$ exists, we use the following lemma which 
shows the optimal value of SDPs~\eqref{eqn:primal-standard},~\eqref{eqn:dual-standard} is continuous and monotone in $\rho$.
\begin{lemma}
\label{lem:opt-rho-monotone}
Fix $\nu \in \R^d$ and $0 < \eps < 1/3$.
Let $\OPT_{\rho} = \OPT_{\rho,\nu,\eps}$ be the optimal value of SDPs~\eqref{eqn:primal-standard},~\eqref{eqn:dual-standard} 
with parameters $\nu$, $\eps$, and $\rho > 0$.
Then, $\OPT_\rho$ is continuous and non-increasing in $\rho$.
Moreover, for $\rho^* = 1/\OPT_{\nu,\eps}$, we have $\OPT_{\rho^*} = 1$.
\end{lemma}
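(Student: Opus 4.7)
The plan is to first unpack the packing SDP~\eqref{eqn:primal-standard} using the block-diagonal structure of the matrices $A_i$, and then derive monotonicity and continuity through an elementary scaling argument, reserving a short calculation at the end to identify $\rho^{*}$.

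I would start by observing that because $A_i$ is block-diagonal, the constraint $\sum_{i} w_i A_i \preceq I$ decouples into the $d\times d$ constraint $\rho\sum_i w_i (X_i-\nu)(X_i-\nu)^\top \preceq I$ and the $N\times N$ diagonal constraint $w_i \le 1/((1-\eps)N)$ for every $i$. Hence SDP~\eqref{eqn:primal-standard} can equivalently be written as
\[
\OPT_\rho \;=\; \max\Bigl\{\,\mathbf 1^\top w \;:\; 0 \le w_i \le \tfrac{1}{(1-\eps)N},\; \rho\,M(w) \preceq I\,\Bigr\},
\]
where $M(w) = \sum_i w_i(X_i-\nu)(X_i-\nu)^\top$. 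Denote the feasible set by $F_\rho$. If $\rho_1 \le \rho_2$, then $\rho_1 M(w) \preceq \rho_2 M(w) \preceq I$ for every $w \in F_{\rho_2}$, so $F_{\rho_2}\subseteq F_{\rho_1}$; maximizing the same linear objective over a smaller set proves $\OPT_{\rho_2}\le\OPT_{\rho_1}$, which is the non-increasing claim.

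For continuity at any $\rho>0$, I would use the scaling $w\mapsto (\rho/\rho')w$. If $w\in F_\rho$ and $\rho' \ge \rho$, then $(\rho/\rho')w$ lies in $F_{\rho'}$, since both box constraints only get tighter and $\rho' M((\rho/\rho')w) = \rho M(w) \preceq I$; plugging in an optimizer yields $\OPT_{\rho'} \ge (\rho/\rho')\OPT_\rho$. Symmetrically, for $\rho' \le \rho$, scaling a maximizer of $F_{\rho'}$ into $F_\rho$ gives $\OPT_{\rho'} \le (\rho/\rho')\OPT_\rho$. Combining these with monotonicity yields $\min(\rho/\rho',\rho'/\rho)\,\OPT_\rho \le \OPT_{\rho'} \le \max(\rho/\rho',\rho'/\rho)\,\OPT_\rho$, which forces $\OPT_{\rho'}\to\OPT_\rho$ as $\rho'\to\rho$.

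Finally, for $\rho^* = 1/\OPT_{\nu,\eps}$: let $w^\star$ attain the optimum of SDP~\eqref{eqn:primal-sdp}, so $w^\star\in\Delta_{N,\eps}$ and $M(w^\star)\preceq \OPT_{\nu,\eps}\cdot I$. Then $\rho^* M(w^\star) \preceq I$ and $w^\star_i \le 1/((1-\eps)N)$, showing $w^\star\in F_{\rho^*}$ with objective $\mathbf 1^\top w^\star = 1$; hence $\OPT_{\rho^*}\ge 1$. Conversely, if some $w\in F_{\rho^*}$ had $\mathbf 1^\top w > 1$, rescaling to $\hat w = w/\mathbf 1^\top w$ would give $\hat w\in\Delta_{N,\eps}$ with $\lambda_{\max}(M(\hat w)) = \lambda_{\max}(M(w))/\mathbf 1^\top w \le \OPT_{\nu,\eps}/\mathbf 1^\top w < \OPT_{\nu,\eps}$, contradicting the optimality of $w^\star$. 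Therefore $\OPT_{\rho^*}=1$. The only mild subtlety is handling boundary cases where $\OPT_{\nu,\eps}=0$ (in which case the statement is vacuous or $\rho^*=\infty$) and keeping the scaling bookkeeping clean; neither requires substantial work.
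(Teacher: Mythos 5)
Your proof is correct and follows essentially the same route as the paper's: monotonicity via feasible-set inclusion, continuity via the scaling $w \mapsto (\rho/\rho')w$ (the paper's $(1-\gamma)w_1$ argument), and $\OPT_{\rho^*}=1$ by converting the optimizer of SDP~\eqref{eqn:primal-sdp} into a feasible point of SDP~\eqref{eqn:primal-standard} and deriving a contradiction from any $w$ with $\mathbf{1}^\top w > 1$. No gaps to report.
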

\begin{proof}
To prove $\OPT_\rho$ is continuous and non-increasing in $\rho > 0$, it is sufficient to show that $\OPT_{\rho_1} \ge \OPT_{\rho_2} \ge (1-\gamma)\OPT_{\rho_1}$ for any $0 < \gamma < 1$ and $\rho_1 = (1-\gamma) \rho_2$.
Let $w_1$, $w_2$ be optimal solutions that achieve $\OPT_{\rho_1}$ and $\OPT_{\rho_2}$ respectively.
Because $\rho_1 < \rho_2$, $w_2$ is feasible for SDP~\eqref{eqn:primal-standard} with parameter $\rho_1$, and therefore $\OPT_{\rho_1} \ge \normone{w_2} =  \OPT_{\rho_2}$.
Similarly, $(1-\gamma)w_1$ is feasible for SDP~\eqref{eqn:primal-standard} with parameter $\rho_2$, because $(1-\gamma)w_1 \rho_2 = w_1 \rho_1$, and thus $\OPT_{\rho_2} \ge \normone{(1-\gamma)w_1} = (1-\gamma)\OPT_{\rho_1}$.

To prove $\OPT_{\rho^*}=1$, we focus on the original primal SDP~\eqref{eqn:primal-sdp} and the packing SDP~\eqref{eqn:primal-standard}.
We first prove $\OPT_{\rho^*} \ge 1$.
Consider the optimal solution $w \in \R^N$ of SDP~\eqref{eqn:primal-sdp} with parameters $(\nu, \eps)$.
We can verify that $w$ is feasible for SDP~\eqref{eqn:primal-standard} with parameters $(\rho^*, \nu, \eps)$:
The constraint on the bottom-right block of SDP~\eqref{eqn:primal-standard} states that $w_i \le \frac{1}{(1-\eps)N}$ for all $i \in [N]$, and the constraint on the top-left block is equivalent to $\sum_i w_i X_i X_i^\top \le \frac{1}{\rho^*} I = \OPT_{\nu,\eps} \cdot I$.
Now assume if $\OPT_{\rho^*} > 1$.
Let $w'$ be the optimal solution to SDP~\eqref{eqn:primal-standard} with $\normone{w'} > 1$.
This leads to a contradiction, because $w = w' / \normone{w'}$ is feasible for 
SDP~\eqref{eqn:primal-sdp}, but its objective value is $\frac{1}{\rho^* \normone{w'}} < \OPT_{\nu,\eps}$.
\end{proof}

Now we are ready to prove Proposition~\ref{prop:approx-sdp} by putting Lemmas~\ref{lem:opt-rho-monotone}~and~\ref{lem:primal-dual-convert-back} together. 

\begin{proof}[\bf Proof of Proposition~\ref{prop:approx-sdp}]
Fix $\nu \in \R^d$ and $0 < \eps < 1/3$.
We first prove that it is sufficient to find some $\rho > 0$, such that we can compute both
\begin{enumerate}
\item[(i)] a solution $w \in \R^N$ to SDP~\eqref{eqn:primal-sdp} with parameters $(\nu, 2\eps)$, whose objective value is $\frac{1}{\rho(1-\eps/10)}$;
\item[(ii)] a solution $M \in \R^{d \times d}$ to SDP~\eqref{eqn:dual-sdp} with parameters $(\nu, \eps)$, whose objective value is $\frac{1}{\rho}$.
\end{enumerate}

This holds for the following reason:
Let $\ALG_P = \frac{1}{\rho(1-\eps/10)}$ and $\ALG_D = \frac{1}{\rho}$.
Then we must have that either $\ALG_P \le 1 + c_4 \beta^2$ or $\ALG_D \ge 1 + \frac{9}{10} c_4 \beta^2$.
Assuming $\ALG_P > 1 + c_4 \beta^2$ and $\ALG_D < 1 + \frac{9}{10} c_4 \beta^2$ leads to the following contradiction:
$$
1-\frac{\eps}{10} = \frac{1/\rho}{1/(\rho(1-\eps/10))} = \frac{\ALG_D}{\ALG_P} < 1 - \frac{0.1 c_4 \beta^2}{1+c_4\beta^2} \le 1 - \min(\frac{1}{20}, \frac{c_4}{20} \beta^2) \le 1-\frac{\eps}{10} \;.
$$
Moreover, because $\ALG_P$ is the value of a solution to SDP~\eqref{eqn:primal-sdp} with parameters $(\nu, 2\eps)$, we know that $\ALG_D = (1-\frac{\eps}{10}) \ALG_P \ge \OPT_{\nu, 2\eps}$, as needed. 

We will define a target interval $[\rho_1, \rho_2]$, such that solving 
SDPs~\eqref{eqn:primal-standard},~\eqref{eqn:dual-standard} 
for any parameters ($\rho \in [\rho_1, \rho_2], \nu, \eps)$ 
will allow us to compute a pair of solutions $w'$ and $(M', y')$ such that:
\begin{enumerate}
\item[(i)] $w'$ is a solution to packing SDP~\eqref{eqn:primal-standard} with $\normone{w'} \ge 1 - \frac{\eps}{10}$; and
\item[(ii)] $(M', y')$ is a solution to covering SDP~\eqref{eqn:dual-standard} with $\tr(M') + \normone{y'} \le 1$.
\end{enumerate}
Then, by Lemma~\ref{lem:primal-dual-convert-back}, we can convert these solutions 
to solutions of SDP~\eqref{eqn:primal-sdp}~and~\eqref{eqn:dual-sdp} with values 
$\ALG_P$ and $\ALG_D$.

We can first solve SDP~\eqref{eqn:primal-standard} with $\rho = 1$, and check 
if the solution satisfies $\normone{w'} \ge 1 - \frac{\eps}{10}$.
If so, we use Lemma~\ref{lem:primal-dual-convert-back} to convert $w'$ back to a solution 
of SDP~\eqref{eqn:primal-sdp} whose objective value is $\frac{1}{1-\eps/10} \le 1 + c_4 \beta^2$ 
and we are done.
For the rest of the proof, we assume $\OPT_{\rho = 1} < 1 - \frac{2\eps}{30}$. 

Fix any $\rho_1 \in \{\rho: \OPT_{\rho} = 1 - \frac{\eps}{30}\}$ and $\rho_2 \in \{\rho: \OPT_{\rho} = 1 - \frac{2\eps}{30} \}$.
Note that they are well-defined because $\OPT_\rho$ is continuous, $\OPT_{\rho^*} = 1$, and $\OPT_{\rho=1} < 1 - \frac{2\eps}{30}$.
For any $\rho \in [\rho_1, \rho_2]$, by monotonicity, we must have $\OPT_{\rho} \in [1-\frac{2\eps}{30}, 1-\frac{\eps}{30}]$.
Therefore, if we can solve the packing/covering SDPs~\eqref{eqn:primal-standard},~\eqref{eqn:dual-standard} approximately up to a multiplicative factor of $(1 \pm O(\eps))$, we can find a primal solution $w'$ with $\normone{w'} \ge 1-\frac{\eps}{10}$, as well as a dual solution $(M', y')$ with $\tr(M') + \normone{y'} \le 1$.

It remains to show that we can find a suitable $\rho$ and solve the SDPs~\eqref{eqn:primal-standard}~\eqref{eqn:dual-standard} in time $\tilde O(N d / \eps^6)$.
We can find $\rho \in [\rho_1, \rho_2]$ using binary search: if $\normone{w'} < 1 - \frac{\eps}{10}$ we will decrease $\rho$, and if $\tr(M') + \normone{y'} > 1$ we will increase $\rho$.

We first show that the binary search takes $O(\log(d/\eps))$ steps.
Observe that by monotonicity, $0 < \rho^* \le \rho_1 \le \rho_2 < 1$.
When $\normtwo{\nu - \mus} \le O(\eps \sqrt{d})$, Lemma~\ref{lem:ub-lb-opt} 
implies that $\OPT_{\nu,\eps} \le O(d)$ and hence $\rho^* = 1/\OPT_{\nu,\eps} \ge \Omega(1/d)$.
If $\rho_1 \ge (1-\frac{\eps}{30})\rho_2$, then by the same argument as 
in the proof of Lemma~\ref{lem:opt-rho-monotone}, we must have 
$\OPT_{\rho_2} \ge (1-\frac{\eps}{30}) \OPT_{\rho_1} = (1-\frac{\eps}{30})^2 > \OPT_{\rho_2}$, a contradiction.
Therefore, the interval $[\rho_1, \rho_2]$ has length at least $\rho_2 - \rho_1 \ge \frac{\eps}{30} \rho_2 \ge \Omega(\eps \rho^*) = \Omega(\eps/d)$.
In summary, we start with an interval of length less than $1$, and the target interval has length 
at least $\Omega(\eps/d)$, so binary search needs at most $O(\log(d/\eps))$ steps.

Finally, we bound from above the running time of the algorithm in this proposition.
In each step of the binary search, we solve packing/covering 
SDPs~\eqref{eqn:primal-standard},~\eqref{eqn:dual-standard} for some $\rho$.
We solve these SDPs to precision $(1 \pm O(\eps))$ as required in this proof, 
which takes time $\tilde O(N d /\eps^6)$, by Corollary~\ref{cor:sdp-runtime} from Section~\ref{sec:sdp-solver}.
We repeat every use of Corollary~\ref{cor:sdp-runtime} $O(\log \log(d/\eps))$ times, 
so that the failure probability is at most $1/10$ when we take a union bound over all $O(\log(d/\eps))$ iterations.
Eventually, when we have a suitable $\rho$, we can convert the solution back to solutions 
for SDPs~\eqref{eqn:primal-sdp}~\eqref{eqn:dual-sdp} using Lemma~\ref{lem:primal-dual-convert-back}.
Therefore, the total running time is 
\[
O(\log(d/\eps)) \cdot \tilde O((N d)/\eps^6) \cdot O(\log \log(d/\eps)) = \tilde O(N d /\eps^6) \;. \qedhere
\]
\end{proof}


\subsection{Positive SDP Solvers}
\label{sec:sdp-solver}
In this subsection, we show how to solve packing/covering SDPs~\eqref{eqn:primal-standard},~\eqref{eqn:dual-standard} 
in time $\tilde O(N d / \eps^6)$. It is known that positive (i.e., packing/covering) SDPs can be solved 
in nearly-linear time and poly-logarithmic number of iterations~\cite{JainY11, AllenLO16, PengTZ16}.
Because SDPs~\eqref{eqn:primal-standard},~\eqref{eqn:dual-standard} are packing/covering SDPs, 
we can apply the positive SDP solvers in~\cite{PengTZ16} directly (Corollary~\ref{cor:sdp-runtime}).

\begin{lemma}[Positive SDP Solver,~\cite{PengTZ16}]
\label{lem:ptz16}
Let $A_1, \ldots, A_n$ be $m \times m$ PSD matrices given in factorized form $A_i = C_i C_i^\top$.
Consider the following pair of packing and covering SDPs:
\begin{alignat*}{3}
\max_{x \ge 0}     & \; \mone^\top x && \quad \textup{ s.t. } \sum_{i=1}^n x_i A_i \preceq I \; . \\
\max_{Y \succeq 0} & \; \tr(Y)       && \quad \textup{ s.t. } A_i \bullet Y \ge 1, \forall i \; .
\end{alignat*}
We can compute, with probability at least $9/10$, a feasible solution $x$ to the packing SDP with $\mone^\top x \ge (1-\eps) \OPT$,
  and together a feasible solution $Y$ to the covering SDP with $\tr(Y) \le (1+\eps) \OPT$ in time $\tilde O((n + m + q) / \eps^{6})$, where $q$ is the total number of non-zero entries in the $C_i$'s.
\end{lemma}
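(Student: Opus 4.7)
The plan is to follow the matrix multiplicative weights (MMW) / mirror descent framework specialized to positive SDPs, as in Arora--Kale, Jain--Yao, and Peng--Tang--Zhang. I maintain, across $T = \tilde O(\log(nm)/\eps^3)$ iterations, a symmetric matrix $\Phi^{(t)}$ and a nonnegative vector $x^{(t)} \in \R^n_{\ge 0}$. The matrix $\Phi^{(t)} \propto -\sum_i x_i^{(t)} A_i$ induces the Gibbs state $Y^{(t)} = \exp(\Phi^{(t)})/\tr(\exp(\Phi^{(t)}))$, which serves as the running dual candidate. At each step I use the covering feedback $\inner{A_i, Y^{(t)}}$ to compute increments $\Delta x^{(t)}$ (the constraints with smallest $\inner{A_i, Y^{(t)}}$ are the most violated dual constraints, so their primal weights should grow), and then update $\Phi^{(t+1)} = \Phi^{(t)} - \eta \sum_i \Delta x_i^{(t)} A_i$ with step size $\eta = \Theta(\eps)$. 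The final primal output is $x = \sum_t \Delta x^{(t)}$ rescaled by $1-O(\eps)$ to be feasible, and the dual is the average $Y \propto \tfrac{1}{T}\sum_t Y^{(t)}$ rescaled to saturate at least one covering constraint.

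The only nontrivial per-iteration task is to simultaneously estimate $\inner{A_i, Y^{(t)}} = \|(Y^{(t)})^{1/2} C_i\|_F^2$ for all $i \in [n]$ to a multiplicative $(1 \pm \eps)$ factor. I handle this in two stages. First, I apply a Johnson--Lindenstrauss matrix $\Pi \in \R^{k \times m}$ with $k = \tilde O(\log n / \eps^2)$ rows so that $\|\Pi (Y^{(t)})^{1/2} C_i\|_F^2$ is within $(1\pm \eps)$ of $\|(Y^{(t)})^{1/2} C_i\|_F^2$ uniformly in $i$, reducing the task to applying the $k\times m$ linear operator $\Pi (Y^{(t)})^{1/2}$ to the columns of each $C_i$. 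Second, I approximate $\exp(\Phi^{(t)}/2)$ by a truncated Chebyshev polynomial (or equivalently a Lanczos iterate) of degree $\tilde O(\sqrt{\|\Phi^{(t)}\|})$, so that applying $(Y^{(t)})^{1/2}$ to a single vector reduces to that many matvecs with $\Phi^{(t)}$. Because $\Phi^{(t)}$ is stored implicitly through $x^{(t)}$ and the factors $C_i$, one matvec against $v \in \R^m$ costs $O(q+m)$ time. The width-reduction trick of Peng--Tang--Zhang, which truncates contributions from indices $i$ with anomalously large $\|C_i^\top (Y^{(t)})^{1/2}\|$, keeps $\|\Phi^{(t)}\| = \tilde O(1/\eps^2)$, so the polynomial degree stays poly-logarithmic in $1/\eps$. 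Overall each iteration costs $\tilde O((n+m+q))$ up to $\poly(1/\eps)$ factors.

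For the convergence argument, I track the log-partition potential $\Psi_t = \log \tr(\exp(\Phi^{(t)}))$ and apply the Golden--Thompson (or Lieb) inequality to obtain the per-step inequality
\[
\Psi_{t+1} - \Psi_t \le -\eta \inner{\sum_i \Delta x_i^{(t)} A_i,\, Y^{(t)}} + O(\eta^2)\, \inner{\bigl(\sum_i \Delta x_i^{(t)} A_i\bigr)^2,\, Y^{(t)}}.
\]
Summing over $t$, together with the MMW rule that concentrates $\Delta x^{(t)}$ on indices with $\inner{A_i, Y^{(t)}} \le 1+\eps$ and respects the width bound, yields $\mone^\top x \ge (1-\eps)\OPT$ after $T=\tilde O(\log(nm)/\eps^3)$ iterations, and the standard min-max duality argument shows the averaged $Y$ is simultaneously a $(1+\eps)$-approximate covering solution. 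Multiplying $T$ by the per-iteration cost gives $\tilde O((n+m+q)/\eps^6)$.

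The main obstacle will be the careful coupling of three distinct sources of approximation error: the JL sketching error, the matrix-exponential polynomial-truncation error, and the MMW discretization error. All three must be tuned to $O(\eps)$ consistently, and crucially one must argue that a single JL sketch (or $\tilde O(1)$ independently resampled sketches) suffices for all $T$ iterations, so that the sketch does not cost an extra factor of $T$ in the runtime. The second delicate point is the width control: showing that $\|\Phi^{(t)}\|$, and hence the polynomial degree used to approximate the matrix exponential, remains poly-logarithmic throughout the execution, even under the implicit feedback from the sketched estimates, is where the $1/\eps^6$ (rather than, say, $1/\eps^4$) dependence ultimately enters. Once this error accounting is in place, producing feasible rounded primal/dual solutions with the claimed approximation is routine.
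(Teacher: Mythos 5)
First, a framing point: the paper does not prove this lemma at all --- it is imported verbatim from \cite{PengTZ16} and used as a black box (the only ``proof'' in the paper is the citation). So what you have written is an attempt to reconstruct the main result of an external paper, not an alternative to an argument the authors give. Your reconstruction does follow the right framework: the matrix-multiplicative-weights / Gibbs-state iteration, Johnson--Lindenstrauss sketching of the quantities $\inner{A_i, Y^{(t)}} = \|(Y^{(t)})^{1/2}C_i\|_F^2$, polynomial (Chebyshev/Lanczos) approximation of the matrix exponential applied to vectors, and a width-reduction step are indeed the ingredients of the nearly-linear-time positive SDP solvers in the Allen-Zhu--Lee--Orecchia and Peng--Tang--Zhang line of work.

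That said, as a proof this has genuine gaps, and they are exactly the parts that constitute the technical content of \cite{PengTZ16}. You state the per-step potential inequality and assert that summing it ``yields $\mone^\top x \ge (1-\eps)\OPT$ after $T = \tilde O(\log(nm)/\eps^3)$ iterations,'' but the derivation of that iteration bound depends on the width bound $\|\Phi^{(t)}\| = \tilde O(1/\eps^2)$, which you do not establish --- you only name the truncation trick that is supposed to deliver it. Likewise, the coupling of the three $O(\eps)$ error sources (sketching, exponential truncation, MMW discretization), and in particular the argument that a single sketch survives the adaptivity of $T$ rounds of feedback, is flagged as ``the main obstacle'' rather than resolved; without it the feasibility and approximation claims for the rounded $x$ and $Y$ do not follow. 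Finally, the arithmetic leading to $\tilde O((n+m+q)/\eps^6)$ is asserted ($T=\tilde O(1/\eps^3)$ times a per-iteration cost of $\tilde O(n+m+q)$ ``up to $\poly(1/\eps)$ factors'' does not pin down $1/\eps^6$), and the dual rounding should rescale $Y$ by $1/\min_i \inner{A_i,Y}$ to make \emph{every} covering constraint feasible, not merely ``saturate at least one.'' Since the lemma is a citation, the correct move in the context of this paper is simply to invoke \cite{PengTZ16}; if you do want a self-contained proof, the items above are the ones that must be carried out in full.
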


An application of the above lemma yields the following corollary:

\begin{corollary}
\label{cor:sdp-runtime}
Fix $\nu \in \R^d$, $0 < \eps < 1/3$, and $0 < \rho \le 1$.
We can compute in $\tilde O(N d / \eps^6)$ time, with probability at least $9/10$,
\begin{enumerate}
\item a $(1+O(\eps))$-approximate solution $w'$ for the packing SDP~\eqref{eqn:primal-standard} with parameters $(\nu, \eps, \rho)$; and
\item a $(1-O(\eps))$-approximate solution $(M', y')$ for the covering SDP~\eqref{eqn:dual-standard} with parameters $(\nu, \eps, \rho)$.
\end{enumerate}
\end{corollary}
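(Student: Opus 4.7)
The plan is to directly reduce to Lemma~\ref{lem:ptz16}. Observe that each $A_i$ is block-diagonal with two rank-one blocks, so it has rank at most $2$, and admits the explicit factorization $A_i = C_i C_i^\top$ with
\[
C_i = \begin{pmatrix} \sqrt{\rho}\,(X_i - \nu) & 0 \\ 0 & \sqrt{(1-\eps)N}\cdot e_i \end{pmatrix} \in \R^{(d+N)\times 2}.
\]
Each $C_i$ has at most $d+1$ nonzero entries, so the total sparsity $q$ in Lemma~\ref{lem:ptz16} is $O(Nd)$. With $n = N$ packing constraints and ambient dimension $m = d+N$, the lemma's running time bound $\tilde O((n+m+q)/\eps^6)$ becomes $\tilde O(Nd/\eps^6)$.

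The packing side is immediate: Lemma~\ref{lem:ptz16} returns a vector $w \ge 0$ with $\sum_i w_i A_i \preceq I$ and $\mone^\top w \ge (1-\eps)\OPT$, and this is literally a $(1-O(\eps))$-approximate solution to~\eqref{eqn:primal-standard}. The covering side requires a short translation. The lemma returns a PSD matrix $Y \in \R^{(d+N)\times(d+N)}$ with $A_i \bullet Y \ge 1$ and $\tr(Y) \le (1+O(\eps))\OPT$. Because each $A_i$ is block-diagonal with a diagonal bottom-right block $(1-\eps)N \cdot e_i e_i^\top$, only the top-left $d\times d$ block of $Y$ (call it $M'$) and the diagonal entries of the bottom-right $N \times N$ block of $Y$ interact with $A_i$. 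Setting $y'_i$ equal to the $i$-th diagonal entry of that bottom-right block, the constraint $A_i \bullet Y \ge 1$ reads exactly
\[
\rho\,(X_i - \nu)^\top M'(X_i - \nu) + (1-\eps)N\cdot y'_i \;\ge\; 1,
\]
which is the constraint of~\eqref{eqn:dual-standard}. PSD-ness of $Y$ forces $M' \succeq 0$ and $y' \ge 0$, and moreover $\tr(M') + \normone{y'} \le \tr(Y)$, so $(M', y')$ is a $(1+O(\eps))$-approximate solution to~\eqref{eqn:dual-standard}.

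The only non-routine part is verifying that the $\tilde O(\cdot)$ polylogarithmic factors of Lemma~\ref{lem:ptz16} remain $\poly\log(Nd)$ under our parameters. This reduces to checking that entries and condition numbers are polynomially bounded: since $\rho \le 1$ by assumption and the pruning condition~\eqref{eqn:good-pruning} together with $\normtwo{\nu - \mus} \le O(\eps\sqrt{d})$ give $\normtwo{X_i - \nu} \le \poly(d, N)$ for every $i$, the blocks defining each $A_i$ have entries of magnitude at most $\poly(d, N)$. This keeps all logarithmic factors hidden inside $\tilde O(\cdot)$ at $\poly\log(Nd)$, yielding the claimed $\tilde O(Nd/\eps^6)$ runtime with success probability at least $9/10$ inherited directly from Lemma~\ref{lem:ptz16}.
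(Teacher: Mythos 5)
Your proposal is correct and follows essentially the same route as the paper: factor each $A_i = C_i C_i^\top$ (your $(d+N)\times 2$ factorization is just a compact form of the paper's), note $q = O(Nd)$ nonzeros, invoke Lemma~\ref{lem:ptz16}, and read off $M'$ as the top-left block and $y'$ as the bottom-right diagonal of the covering solution $Y$. The only presentational difference is that the paper additionally remarks that $Y$ should be maintained implicitly rather than written out as an $(N+d)\times(N+d)$ matrix, which your block-extraction argument implicitly accommodates.
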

\begin{proof}
The input matrices $A_i \in \R^{(d+N) \times (d+N)}$ in the SDPs can be factorized as $A_i = C_i C_i^\top$, where
\begin{align*}
C_i & = \left[ \begin{array}{cc}
\sqrt{\rho} (X_i-\nu) \quad 0_{d \times (d-1)} & 0_{N \times d} \\ 0_{d \times N} & \sqrt{(1-\eps)N} \cdot e_i e_i^\top
\end{array} \right] \; .
\end{align*}
The total number of non-zeros in all $C_i$'s is $q = N (d + 1) = O(Nd)$, 
so by Lemma~\ref{lem:ptz16}, we can solve SDPs~\eqref{eqn:primal-standard},~\eqref{eqn:dual-standard} 
in time $\tilde O(Nd / \eps^6)$ with probability $9/10$.
Note that the dual solution should be maintained implicitly to avoid writing down 
an $(N+d) \times (N+d)$ matrix: the top-left block of the dual solution $Y$ is $M'$, and the diagonals of the bottom-right block is $y'$.
\end{proof}

%
%


\section{Robust Mean Estimation under Second Moment Assumptions}
\label{sec:bounded}

In this section, we use the algorithmic ideas from Section~\ref{sec:mean} 
to establish Theorem~\ref{thm:cov-mean}. The algorithm in this case is similar
to the one for the sub-gaussian case with some important 
differences, due to the different concentration properties in the two settings.

Note that it suffices to prove Theorem~\ref{thm:cov-mean} under the assumption that $\sigma = 1$, i.e.,
the covariance satisfies $\Sigma \preceq I$.
This is without loss of generality: Given a distribution $D$ with $\Sigma \preceq \sigma^2 I$, 
we can first divide every sample by $\sigma$, run the algorithm to learn the mean, 
and multiply the output by $\sigma$.

We will require a set of conditions to hold for the good samples (similar to Conditions~\eqref{eqn:good-sample-moments}~and~\eqref{eqn:good-pruning} for sub-gaussian distributions in Section~\ref{thm:cov-mean}).
We would like the set of good samples to have bounded variance in all directions.
However, because we did not make any assumption on the higher moments, 
  it may be possible for a few \emph{good} samples to affect the empirical covariance too much.
Fortunately, such samples have small probability and they do not contribute much to the mean, so we can remove them in the preprocessing step (see Remark~\ref{rem:naive-pruning}).

Recall that $N = |G^\star|$, and $\Delta_{N,\eps}$ is the set
$\Delta_{N,\eps} = \{w \in \R^N : \sum_i w_i = 1 \text{ and } 0 \le w_i \le \frac{1}{(1-\eps) N} \text{ for all } i\}$.
We require the following condition to hold:
for all $w \in \Delta_{N, 3\eps}$,
\begin{align}
\label{eqn:conditions-cov}
\begin{split}
\normtwo{\sum_{i \in G^\star} w_i (X_i - \mus)} \le \delta \; , \quad
\normtwo{\sum_{i \in G^\star} w_i (X_i - \mus) (X_i - \mus)^\top} \le \delta_2 \; , \\
\forall i\in G^\star, \; \normtwo{X_i-\mus} \le O(\sqrt{d/\eps}) \; .
\end{split}
\end{align}
where $\delta = c_1 \sqrt{\eps}$ and $\delta_2 = c_1$ for some universal constants $c_1$.
It follows from Lemma~A.18 of~\cite{DKK+17} that these conditions will be satisfied with high constant probability after $N = \Omega((d\log d)/\eps)$ samples.

The high-level approach is the same as in learning the mean of sub-gaussian distributions: we maintain $\nu \in \R^d$ as our current guess for the unknown mean $\mus$, and try to move $\nu$ closer to $\mus$ by solving the dual SDP~\eqref{eqn:dual-sdp}; eventually $\nu$ will be close enough, and the primal SDP~\eqref{eqn:primal-sdp} can provide good weights $w$ so that we can output the weighted empirical mean $\hat \mu_w$.

The algorithm will be almost identical to Algorithm~\ref{alg:pc-sdp}.
The only difference is in the ``if'' statement, where we need a different threshold to decide if the current primal SDP solution is good (or equivalently, whether our guess of $\nu$ is close enough to $\mus$).

We are now ready to present our algorithm (Algorithm~\ref{alg:sdp-boundedcov}) to robustly estimate the mean of bounded covariance distributions.
Algorithm~\ref{alg:sdp-boundedcov} is almost identical to Algorithm~\ref{alg:pc-sdp}.
The differences are highlighted in bold font.
In the {\bf if} statement, we need a different threshold to decide whether the current primal SDP solution is good (or equivalently, whether our guess of $\nu$ is close enough to $\mus$).
We use Proposition~\ref{prop:apx-solve-cov} to solve the SDPs.
In addition, we need to replace Lemmas~\ref{lem:wrong-mean-primal-nosol}~and~\ref{lem:good-dual-better-nu} with Lemmas~\ref{lem:cov-good-primal}~and~\ref{lem:cov-good-dual}, but this merely changes our analysis and has no impact on the algorithm.

\begin{algorithm}[h]
  \caption{Robust Mean Estimation for Bounded Covariance Distributions}
  \label{alg:sdp-boundedcov}
  \SetKwInOut{Input}{Input}
  \SetKwInOut{Output}{Output}
  \Input{An $\eps$-corrupted set of $N$ samples $\{X_i\}_{i=1}^N$ on $\R^d$ with $N = \tilde \Omega(d/\eps^2)$ and $\eps < 1/3$ (after the preprocessing step described in Remark~\ref{rem:naive-pruning}).}
  \Output{A vector $\hat \mu \in \R^d$ such that, with probability $9/10$, $\normtwo{\hat \mu - \mus} \le {\color{red} \bm{O(\sqrt{\eps})}}$.}
  Let $\nu \in \R^d$ be the coordinate-wise median of $\{X_i\}_{i=1}^N$\;
  \For{$i = 1$ {\bf to} $O(\log d)$}{
   Use {\color{red} \bf Proposition~\ref{prop:apx-solve-cov}} to compute either \\
   {\em (i)} A good solution $w \in \R^N$ for the primal SDP~\eqref{eqn:primal-sdp} with parameters $\nu$ and $2\eps$; or \\
   {\em (ii)} A good solution $M \in \R^{d \times d}$ for the dual SDP~\eqref{eqn:dual-sdp} with parameters $\nu$ and $\eps$\;
   \eIf{the objective value of $w$ in SDP~\eqref{eqn:primal-sdp} is at most {\color{red} $\bm{c_4}$}}{
     \Return{the weighted empirical mean $\hat \mu_w = \sum_{i=1}^N w_i X_i$} (Lemma~\ref{lem:cov-good-primal})\;
   }{
     Move $\nu$ closer to $\mus$ using the top eigenvector of $M$ (Lemma~\ref{lem:cov-good-dual}). }
  }
\end{algorithm}

\begin{remark} \label{rem:naive-pruning}
{\em We wish to throw away samples that are too far from $\mus$.
However, we do not know $\mus$, so we run the following preprocessing step.
We start with an $\eps$-corrupted set of $2N$ samples drawn from $D$ (the true distribution) and partition them into two sets of $N$ samples, $S_1$ and $S_2$.
We first compute the coordinate-wise median $\tilde \mu$ of $S_1$.
Notice that $\normtwo{\tilde \mu - \mus} \le O(\eps\sqrt{d})$ with high probability.
We will use $\tilde \mu$ to throw away samples that are too far from $\mus$.

Let $\mathcal{B}$ be a ball of radius $O(\sqrt{d/\eps})$ around $\tilde \mu$.
Let $G^\star_2$ denote the original set of uncorrupted samples corresponds to $S_2$. 
By the bounded-covariance assumption, we know that with high probability, $(1-O(\eps))$-fraction of the samples in $G^\star_2$ are in $\mathcal{B}$.
Thus, if we change all samples in $S_2 \setminus \mathcal{B}$ to $\tilde \mu$, the resulting set $S'$ is an $O(\eps)$-corrupted set of samples drawn from $D$,
  and all samples in $S'$ are not too far from $\mus$.
Therefore we can use $S'$ as the input to Algorithm~\ref{alg:sdp-boundedcov}.
In the rest of this section, we abuse notation and use $\eps$ to denote the fraction of corrupted samples in $S'$.}
\end{remark}

\medskip

\noindent {\bf Notation.} In this section, we use $c_1, \ldots, c_6$ to denote universal constants.
They can be chosen in a way that is similar to how we set constants for Section~\ref{sec:mean} in Appendix~\ref{apx:const}.




\subsection{Optimal Values of the SDPs}
The following lemma is similar to Lemma~\ref{lem:ub-lb-opt}.
Specifically, Lemma~\ref{lem:ub-lb-opt-cov} shows that when 
$\normtwo{\mus-\nu} \ge c_2 \beta$, $\OPT$ is approximately $\normtwo{\mus-\nu}^2$.
The difference is that {\it (i)} in Section~\ref{sec:mean}, $\OPT$ is roughly $1 + \normtwo{\mus-\nu}^2$, because we know the true covariance matrix is $I$, but in this section we only know the second moment matrix is bounded; and {\it (ii)} we need $\normtwo{\mus-\nu} \ge c_2 \beta$ in both settings, but in this section $\beta = \Theta(\sqrt{\delta^2/\eps}) = 1$ (rather than $\beta = \sqrt{\eps\ln(1/\eps)}$ as in Section~\ref{sec:mean}) because the values of $\delta$ from the concentration bounds is different.

\begin{lemma}
\label{lem:ub-lb-opt-cov}
Fix $0 < \eps < 1/3$ and $\nu \in \R^d$.
Let $\delta = c_1 \sqrt{\eps}$, $\delta_2 = c_1$, and $\beta = 1$.
Let $\{X_i\}_{i=1}^N$ be a set of $\eps$-corrupted samples drawn from a distribution on $\R^d$ with $\Sigma \preceq I$, where $N = \Omega((d \log d)/\eps)$. 
Let $\OPT_{\nu,\eps}$ denote the optimal value of the SDPs~\eqref{eqn:primal-sdp},~\eqref{eqn:dual-sdp} with parameters $(\nu,\eps)$.
Let $r = \normtwo{\nu - \mus}$. Then, we have
\[
r^2 - 2 \delta r \le \OPT_{\nu,2\eps} \le \OPT_{\nu,\eps} \le \delta_2 + r^2 + 2\delta r \; .
\]
In particular, when $\normtwo{\mus - \nu} \ge c_2 \beta$, we have
\[
0.9 r^2 \le \OPT_{\nu,2\eps} \le \OPT_{\nu,\eps} \le 1.1 r^2 \; .
\]
\end{lemma}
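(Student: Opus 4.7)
The plan is to mirror the argument of Lemma~\ref{lem:ub-lb-opt}, but using the weaker Condition~\eqref{eqn:conditions-cov} (which only gives an \emph{upper} bound $\delta_2$ on the weighted second moment of the good samples, not a two-sided bound). The lemma splits into upper and lower bounds on $\OPT_{\nu,\eps}$ and $\OPT_{\nu,2\eps}$, together with the implication ``$r\ge c_2\beta\Rightarrow$ simplified bounds.'' I would first handle $\OPT_{\nu,\eps}$ and then note that the argument for $\OPT_{\nu,2\eps}$ is essentially identical after enlarging the slack from $\eps$ to $3\eps$ in the set over which Condition~\eqref{eqn:conditions-cov} is applied.

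For the upper bound, I will exhibit the feasible primal solution $w_i=\frac{1}{|G|}$ for $i\in G$ and $w_i=0$ otherwise, which lies in $\Delta_{N,\eps}\subseteq \Delta_{N,3\eps}$. Then for any unit vector $y$,
\begin{align*}
\sum_{i\in G} w_i \langle X_i-\nu, y\rangle^2
&= \sum_{i\in G} w_i \langle X_i-\mus, y\rangle^2 + \langle \mus-\nu, y\rangle^2 \\
&\quad + 2\,\Bigl\langle \sum_{i\in G} w_i(X_i-\mus),\,y\Bigr\rangle\langle \mus-\nu, y\rangle.
\end{align*}
Condition~\eqref{eqn:conditions-cov} bounds the first term by $\delta_2$ and the inner product in the last term by $\delta$, giving $\OPT_{\nu,\eps}\le \delta_2 + r^2 + 2\delta r$ after maximizing over $y$.

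For the lower bound, I will use the dual feasible solution $M=yy^\top$ with $y=(\mus-\nu)/\|\mus-\nu\|$, and observe that since $|G|=(1-\eps)N$, the smallest $(1-\eps)N$ values of $(X_i-\nu)^\top M (X_i-\nu)$ are dominated (entry-wise) by the smallest $(1-2\eps)N$ values coming from $G$; call that index set $S$ and set $w'_i=\frac{1}{|S|}$ on $S$. Then $w'\in\Delta_{N,2\eps}$ (and, for the $2\eps$-version, in $\Delta_{N,3\eps}$), and expanding as above,
\[
\OPT_{\nu,\eps}\ge \sum_{i\in G} w'_i \langle X_i-\nu,y\rangle^2
\ge 0 + r^2 - 2\delta r,
\]
where the only change from the sub-gaussian proof is that the $\sum w'_i\langle X_i-\mus,y\rangle^2$ term is now simply lower bounded by $0$ (we have no lower bound on the variance of $D$). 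The $\OPT_{\nu,2\eps}$ version is obtained by replacing the size $(1-2\eps)N$ by $(1-3\eps)N$, which is still covered by the ``$\Delta_{N,3\eps}$'' form of Condition~\eqref{eqn:conditions-cov}; the inequality $\OPT_{\nu,2\eps}\le \OPT_{\nu,\eps}$ follows from $\Delta_{N,\eps}\subseteq \Delta_{N,2\eps}$.

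For the simplified bounds when $r\ge c_2\beta=c_2$, I will just choose $c_2$ large enough (depending on $c_1$) so that $\delta_2/r^2$ and $\delta/r$ are both $\le 0.05$; then the main bounds collapse to $0.9r^2\le \OPT_{\nu,2\eps}\le \OPT_{\nu,\eps}\le 1.1r^2$. The only conceptual point worth flagging is that $\beta$ had to increase from $\sqrt{\eps\log(1/\eps)}$ to a constant precisely because the second-moment slack $\delta_2$ is now $\Theta(1)$ instead of $o(1)$; there is no genuine obstacle, since the computations carry over verbatim once one drops the $(1\pm\delta_2)$ terms that came from the identity-covariance assumption in Section~\ref{sec:mean}.
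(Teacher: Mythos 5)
Your proof is correct and follows essentially the same route as the paper's: the same primal witness ($w_i = 1/|G|$ on $G$) and dual witness ($M = yy^\top$ with $y = (\mus-\nu)/\normtwo{\mus-\nu}$), with Condition~\eqref{eqn:conditions-cov} giving only the one-sided bound $\delta_2$ on the good-sample second moment (lower bounded by $0$), and the $\OPT_{\nu,2\eps}$ case handled via $\Delta_{N,3\eps}$ exactly as in the paper. One tiny numerical nit: asking $\delta_2/r^2 \le 0.05$ and $\delta/r \le 0.05$ only yields $\OPT_{\nu,\eps} \le 1.15 r^2$; you need the combined condition $\delta_2 + 2\delta r \le 0.1\, r^2$, which the same choice of a sufficiently large $c_2$ provides.
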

\begin{proof}
We take the same feasible primal/dual solutions as in the proof of Lemma~\ref{lem:ub-lb-opt}.
We get different upper/lower bounds because we use Conditions~\eqref{eqn:conditions-cov} in this section.

Consider a feasible primal solution $w$ with $w_i = \frac{1}{|G|}$ for all $i \in G$ and $w_i = 0$ otherwise.
\begin{align*}
\OPT_{\nu,\eps} & \le \max_{y \in \R^d, \normtwo{y}=1} \left(\sum_{i\in G} w_i \inner{X_i - \mus, y}^2 + \inner{\mus - \nu, y}^2 + 2\inner{\sum_{i\in G} w_i (X_i - \mus), y}\inner{\mus - \nu, y}\right) \\
  & \le \delta_2 + \normtwo{\mus - \nu}^2 + 2\delta \normtwo{\mus - \nu} \; .
\end{align*}

One feasible dual solution is $M = yy^\top$ where $y = \frac{\mus - \nu}{\normtwo{\mus - \nu}}$.
Let $S$ denote the $(1-2\eps)N$ good samples with smallest $(X_i - \nu)^\top M (X_i - \nu)$.
Let $w'_i = \frac{1}{(1-2\eps)N}$ for all $i \in S$ and $w'_i = 0$ otherwise.
\begin{align*}
\OPT_{\nu,\eps} 
  & \ge \sum_{i \in G} w'_i \inner{X_i - \mus, y}^2 + w'_G \normtwo{\mus - \nu}^2 + 2 \sum_{i \in G} w'_i \inner{X_i - \mus, y} \normtwo{\mus - \nu} \\
  & \ge \normtwo{\mus - \nu}^2 - 2 \delta \normtwo{\mus - \nu} \; . 
\end{align*}

The same upper/lower bounds hold for $\OPT_{\nu,2\eps}$ as well.
\end{proof}

\subsection{When Primal SDP Has Good Solutions}
We prove that if the weighted empirical mean is far away from the true mean, then the value of SDP~\eqref{eqn:primal-sdp} must be large.

The next lemma is similar to Lemma~\ref{lem:wrong-mean-primal-nosol}.
The same intuition still holds: if $\eps$-fraction of the samples distort the mean by $\Omega(\delta)$, 
then they must introduce $\Omega(\delta^2/\eps)$ error to the second moment matrix.
Note that because the true covariance matrix is no longer $I$, we cannot say 
anything about the contribution of the good samples.

\begin{lemma}
\label{lem:cov-good-primal}
Fix $0 < \eps < 1/3$.
Let $\delta = c_1 \sqrt{\eps}$, $\delta_2 = c_1$, and $\beta = 1$.
For all $w \in \Delta_{N, 2\eps}$, if $\normtwo{\hat \mu_w - \mus} \ge c_3 \delta$, then for all $\nu \in \R^d$, $\lambda_{\max}\left(\sum_{i=1}^N w_i (X_i - \nu)(X_i - \nu)^\top\right) \ge c_4 \beta^2$.
\end{lemma}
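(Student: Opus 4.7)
The plan is to mirror the structure of the proof of Lemma~\ref{lem:wrong-mean-primal-nosol}, but with two simplifications that arise from the bounded-covariance setting: we no longer have a $1 + \Omega(\delta^2/\eps)$ target (just $\Omega(\beta^2) = \Omega(1)$, since $\beta = 1$ here), and we cannot (and do not need to) give a tight lower bound on the contribution of the good samples to the quadratic form because the true covariance is only bounded above, not identity.

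First I would split into two cases depending on the distance $r = \normtwo{\mus - \nu}$. If $r \ge c_5 \beta = c_5$, then since $w \in \Delta_{N, 2\eps}$ is feasible for the primal SDP with parameters $(\nu, 2\eps)$, its objective value is at least $\OPT_{\nu, 2\eps}$, and Lemma~\ref{lem:ub-lb-opt-cov} gives $\OPT_{\nu, 2\eps} \ge 0.9 r^2 \ge 0.9 c_5^2$, which is at least $c_4 \beta^2$ for a suitable choice of $c_4$. So we may assume $r < c_5$ for the remainder.

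In the remaining case I would project the samples onto the unit vector $y = (\hat \mu_w - \mus)/\normtwo{\hat \mu_w - \mus}$, so that it suffices to lower bound $\sum_i w_i \inner{X_i - \nu, y}^2$. Apply the triangle inequality exactly as in the proof of Lemma~\ref{lem:wrong-mean-primal-nosol}:
\begin{align*}
\Bigl|\sum_{i \in B} w_i \inner{X_i - \nu, y}\Bigr|
  &\ge \normtwo{\hat \mu_w - \mus} - \Bigl|\sum_{i \in G} w_i \inner{X_i - \mus, y}\Bigr| - w_B \cdot \normtwo{\mus - \nu} \\
  &\ge c_3 \delta - \delta - 2\eps c_5 \;\ge\; c_6 \delta,
\end{align*}
where the middle term uses Condition~\eqref{eqn:conditions-cov} (which holds for $w$ restricted to $G^\star$, an element of $\Delta_{N, 3\eps}$), and the last inequality holds by choosing $c_3$ sufficiently large relative to $c_5/c_1$ (recall $\delta = c_1\sqrt{\eps}$, so $2\eps c_5 = O(\sqrt{\eps}\,\delta)$ is absorbable). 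Then Cauchy--Schwarz together with $w_B \le 2\eps$ gives $\sum_{i \in B} w_i \inner{X_i - \nu, y}^2 \ge c_6^2 \delta^2/(2\eps) = c_6^2 c_1^2 / 2$, a constant.

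The last step is to observe that the good samples' contribution $\sum_{i \in G} w_i \inner{X_i - \nu, y}^2$ is non-negative, so the overall quadratic form is bounded below by $c_6^2 c_1^2/2$, which can be made $\ge c_4 \beta^2 = c_4$ by fixing $c_4 \le c_6^2 c_1^2/2$. The main conceptual point, and the only ``obstacle'' worth flagging, is that unlike in the sub-gaussian case we do not need a $1 - O(\delta_2)$ lower bound from the good samples: here $\beta^2 = 1$ coincides in order with $\delta^2/\eps$, so the bad samples alone already account for the required $\Omega(\beta^2)$ slack. The only bookkeeping to be careful about is making $c_3$ large enough (relative to $c_1, c_5, c_6$) so that the triangle inequality chain above yields a clean $c_6 \delta$ lower bound; these constants can be set in the same style as Appendix~\ref{apx:const}.
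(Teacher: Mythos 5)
Your proposal is correct and follows essentially the same route as the paper's proof: the same case split on $\normtwo{\mus-\nu}$ via Lemma~\ref{lem:ub-lb-opt-cov}, the same projection onto $y=(\hat\mu_w-\mus)/\normtwo{\hat\mu_w-\mus}$ with the triangle inequality and Condition~\eqref{eqn:conditions-cov}, Cauchy--Schwarz with $w_B\le 2\eps$, and dropping the (non-negative) good-sample contribution. Your observation that the good samples need no lower bound here—since $\beta^2=1=\Theta(\delta^2/\eps)$—is exactly the simplification the paper uses.
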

\begin{proof}
By Lemma~\ref{lem:ub-lb-opt-cov}, we know that if $\normtwo{\mus - \nu} \ge c_5 \beta$, we have $\OPT_{\nu,2\eps} \ge 0.9 \normtwo{\mus - \nu}^2 \ge c_4 \beta^2$. 
Therefore, we can assume that $\normtwo{\mus - \nu} < c_5 \beta$.

Let $w \in \Delta_{N, 2\eps}$ denote the optimal primal solution.
For $y = (\hat \mu_w - \mus) / \normtwo{\hat \mu_w - \mus}$, we have
\begin{align*}
\abs{\sum_{i\in B} w_i \inner{X_i - \nu, y}}
  & \ge \normtwo{\hat \mu_w - \mus} - \delta - 2\eps c_5 \beta \ge c_6 \delta \; . 
\end{align*}

By the Cauchy-Schwarz inequality and the fact that $w_B \le 2\eps$,
we get that $\sum_{i\in B} w_i \inner{X_i - \nu, y}^2 \ge \frac{c_6^2}{2} (\delta^2/\eps)$.
We conclude the proof by observing that
\begin{align*}
\lambda_{\max} \left(w_i (X_i - \nu)(X_i - \nu)^\top \right)
  & \ge \sum_{i=1}^N w_i \inner{X_i - \nu, y}^2 \ge \sum_{i\in B} w_i \inner{X_i - \nu, y}^2 
  \ge \frac{c_6^2 \delta^2}{2\eps} \ge c_4 \beta^2 \; . \qedhere 
\end{align*}
\end{proof}

In summary, if we can find a solution $w \in \R^N$ to the primal SDP~\eqref{eqn:primal-sdp} whose objective value is $O(\beta^2)$, then we are done because Lemma~\ref{lem:cov-good-primal} guarantees that (no matter what $\nu$ is) the weighted empirical mean $\hat \mu_w$ is close to the true mean.

\subsection{When Primal SDP Has No Good Solutions}

We show that when the primal SDP has no good solutions, we can solve the dual (approximately) and the dual will allow us to move $\nu$ closer to $\mus$ by a constant factor.
The next lemma is similar to Lemma~\ref{lem:good-dual-better-nu}.
The first part of the argument changes slightly because we are using Condition~\eqref{eqn:conditions-cov}, 
while the second part (the geometric argument) is identical to that of Lemma~\ref{lem:good-dual-better-nu}.

\begin{lemma}
\label{lem:cov-good-dual}
Fix $0 < \eps < 1/3$ and $\nu \in \R^d$.
Assume $M \in \R^{d \times d}$ is a solution to dual SDP~\eqref{eqn:dual-sdp} with parameters $(\nu,\eps)$, and the objective value of $M$ is at least $\max\left(0.9 c_4 \beta^2, 0.95 \, \OPT_{\nu,2\eps}\right)$.
Then, we can efficiently find a vector $\nu'$ such that $\normtwo{\nu' - \mus} \le \frac{3}{4} \normtwo{\nu - \mus}$.
\end{lemma}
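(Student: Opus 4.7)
The plan is to mirror the proof of Lemma~\ref{lem:good-dual-better-nu}, substituting Condition~\eqref{eqn:conditions-cov} for Condition~\eqref{eqn:good-sample-moments} and Lemma~\ref{lem:ub-lb-opt-cov} for Lemma~\ref{lem:ub-lb-opt}. The key conceptual change is that, because we no longer assume identity covariance, the additive $1$ that appeared throughout Lemma~\ref{lem:good-dual-better-nu} is gone; the argument now revolves around $r^2$ rather than $1 + r^2$.

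First I would use the hypothesis that $M$ attains objective value at least $0.9 c_4 \beta^2$ in the dual SDP~\eqref{eqn:dual-sdp}, so $\OPT_{\nu,\eps} \ge 0.9 c_4 \beta^2$; Lemma~\ref{lem:ub-lb-opt-cov} then gives $r := \normtwo{\mus - \nu} \ge c_2 \beta$ (which is a large enough constant, since $\beta = 1$), and hence the clean bound $0.95\,\OPT_{\nu,2\eps} \ge 0.95 \cdot 0.9\,r^2 \ge 0.85\,r^2$ applies. Because the objective is the mean of the smallest $(1 - \eps)N$ entries of $(X_i - \nu)^\top M (X_i - \nu)$, and $|G| \ge (1-\eps)N$, this mean is upper-bounded by $\frac{1}{|G|}\sum_{i \in G} (X_i - \nu)^\top M (X_i - \nu)$, so
\[
0.85\, r^2 \;\le\; \frac{1}{|G|}\sum_{i \in G} (X_i - \nu)^\top M (X_i - \nu).
\]

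Next, I would expand $(X_i - \nu) = (X_i - \mus) + (\mus - \nu)$ and apply Condition~\eqref{eqn:conditions-cov} (viewing the uniform weights on $G$ as an element of $\Delta_{N,\eps} \subseteq \Delta_{N,3\eps}$) together with $\tr(M) \le 1$, $M \succeq 0$. The quadratic term contributes at most $\delta_2$, and the cross term at most $2\delta\,r$. By choosing $c_2$ large enough so that $\delta_2 + 2\delta r \le 0.1\, r^2$ (using $r \ge c_2 \beta = c_2$ and $\delta_2$ constant, $\delta = c_1 \sqrt\eps$), we obtain $\inner{M, (\mus - \nu)(\mus - \nu)^\top} \ge 0.75\, r^2$, exactly the geometric inequality driving the rest of Lemma~\ref{lem:good-dual-better-nu}.

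From this point the argument is identical to the corresponding part of Lemma~\ref{lem:good-dual-better-nu}: diagonalize $M = \sum_i \lambda_i v_i v_i^\top$ with $\sum_i \lambda_i \le 1$, write $\mus - \nu = \sum_i \alpha_i v_i$, and conclude $\lambda_1 \ge 3/4$ and $\alpha_1^2 \ge \tfrac{2}{3}\,r^2$, so the angle between $v_1$ and $\mus - \nu$ is at most $\theta \le \cos^{-1}(\sqrt{2/3})$. I would then estimate $r' \approx r$ from $\OPT_{\nu,2\eps}$ via Lemma~\ref{lem:ub-lb-opt-cov} (with multiplicative accuracy $(1 \pm O(\eps))$ yielding $0.85\, r^2 \le (r')^2 \le 1.15\, r^2$), set $\nu' = \nu + r' v_1$, and bound $\normtwo{\nu' - \mus} \le 2r\sin(\theta/2) + |r' - r| < \tfrac{3}{4}\, r$ by the same trigonometric calculation as before. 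The $\pm v_1$ ambiguity is again resolved by solving SDP~\eqref{eqn:primal-sdp} at both candidates $\nu + r' v_1$ and $\nu - r' v_1$, since the ``wrong'' choice lands at distance $> 1.8\,r$ from $\mus$ and, by Lemma~\ref{lem:ub-lb-opt-cov}, its optimal value exceeds that of the correct choice by a gap (on the order of $r^2$) that remains detectable under multiplicative approximation $(1 \pm O(\eps))$.

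The main obstacle is constant management: in contrast to Lemma~\ref{lem:good-dual-better-nu}, here $\delta_2$ is a universal constant rather than $O(\eps\log(1/\eps))$, so to absorb $\delta_2 + 2\delta r$ inside $0.1\, r^2$ we must take $c_2$ sufficiently large and then ensure that $c_4$ is set so that the implication ``$\OPT_{\nu,\eps} \ge 0.9\, c_4\, \beta^2$'' forces $r \ge c_2 \beta$. This is exactly the kind of constant calibration handled in Appendix~\ref{apx:const}, and the noted slack factor $0.95$ (replacing $1 - \eps/10$) is chosen to give just enough room for this absorption. Modulo these constant choices, the proof goes through essentially verbatim.
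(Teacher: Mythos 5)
Your proposal is correct and follows essentially the same route as the paper's proof: lower-bound the average of $(X_i-\nu)^\top M(X_i-\nu)$ over the good samples by $0.85\,r^2$ via Lemma~\ref{lem:ub-lb-opt-cov}, expand with Condition~\eqref{eqn:conditions-cov} to get $\inner{M,(\mus-\nu)(\mus-\nu)^\top} \ge \tfrac{3}{4}r^2$, and then reuse the geometric argument of Lemma~\ref{lem:good-dual-better-nu} verbatim, with the same caveat about calibrating $c_2$ and $c_4$ since $\delta_2$ is now a constant.
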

\begin{proof}
By Lemma~\ref{lem:ub-lb-opt-cov}, we know that $\OPT_{\nu,\eps} \ge 0.9 c_4 \beta^2$ implies that $\normtwo{\mus - \nu} \ge c_2 \beta$ and thus $0.95 \, \OPT_{\nu,2\eps} \ge 0.85 \normtwo{\nu - \mus}^2$.
The dual objective is the mean of the smallest $(1-\eps)$-fraction of the entries $X_i^\top M X_i$.
Because one way to choose $(1-\eps)$-fraction is to focus on the good samples,
$
\frac{1}{|G|} \sum_{i \in G} (X_i - \nu)^\top M (X_i - \nu) \ge 0.85 \normtwo{\mus-\nu}^2. 
$

We know that $M \succeq 0$, $\tr(M) = 1$.
Without loss of generality, we can assume $M$ is symmetric.
By Condition~\eqref{eqn:conditions-cov}, we can prove 
$\inner{M, (\mus - \nu)(\mus - \nu)^\top} \ge \frac{3}{4} \normtwo{\mus-\nu}^2$ as follows.
\begin{align*}
0.85 \normtwo{\mus - \nu}^2
 & \le \frac{1}{|G|}\sum_{i\in G} \inner{M, (X_i - \mus)(X_i - \mus)^\top + 2(X_i - \mus)(\mus-\nu) + (\mus-\nu)(\mus-\nu)^\top} \\
 & \le \delta_2 + 2\delta \normtwo{\mus-\nu} + \inner{M, (\mus-\nu)(\mus-\nu)^\top} \\
 & \le 0.1 \normtwo{\mus-\nu}^2 + \inner{M, (\mus-\nu)(\mus-\nu)^\top} \; .
\end{align*}
Therefore, we have a matrix whose inner product with $(\mus-\nu)(\mus-\nu)^\top$ is approximately maximized, this implies that the top eigenvector of $M$ aligns with $(\nu - \mus)$.
We omit the rest of the proof because the geometric analysis is identical to that of Lemma~\ref{lem:good-dual-better-nu}.
\end{proof}

\subsection{Proof of Theorem~\ref{thm:cov-mean}}
In this section we prove Theorem~\ref{thm:cov-mean} (Correctness and Runtime of Algorithm~\ref{alg:sdp-boundedcov}).
By combining Lemmas~\ref{lem:cov-good-primal}~and~\ref{lem:cov-good-dual}, we can make progress by either finding a solution to the primal SDP~\eqref{eqn:primal-sdp} with objective value at most $c_4 \beta^2$, 
or finding an approximately optimal solution to the dual SDP~\eqref{eqn:dual-sdp} 
whose objective value is at least $0.9 c_4 \beta^2$.
This next proposition shows that this can be done 
in time $\tilde O(N d) / \poly(\eps)$.

\begin{proposition}
\label{prop:apx-solve-cov}
Fix $0 < \eps < 1/3$, and $\nu \in \R^d$ with $\normtwo{\nu - \mus} \le O(\sqrt{d/\eps})$.
We can compute in time $\tilde O(N d)/ \eps^6)$, with probability at least $9/10$, either (i) a solution $w$ for primal SDP~\eqref{eqn:primal-sdp} with parameters $(\nu, 2\eps)$ whose objective value is at most $c_4 \beta^2$; or (ii) a solution $M$ for dual SDP~\eqref{eqn:dual-sdp} with parameter $(\nu, \eps)$ whose objective is at least $\max\left(0.9 c_4 \beta^2, (1-\frac{\eps}{10}) \OPT_{\nu,2\eps}\right)$.
\end{proposition}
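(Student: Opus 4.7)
The plan is to mirror the proof of Proposition~\ref{prop:approx-sdp} essentially verbatim, adapting only the thresholds that change because $\beta = 1$ (rather than $\sqrt{\eps \ln(1/\eps)}$) and because the upper bound on $\OPT_{\nu,\eps}$ is now derived from Lemma~\ref{lem:ub-lb-opt-cov}. I would reformulate SDPs~\eqref{eqn:primal-sdp} and~\eqref{eqn:dual-sdp} as the packing/covering pair~\eqref{eqn:primal-standard}, \eqref{eqn:dual-standard} with an extra scaling parameter $\rho > 0$, and observe that Lemmas~\ref{lem:primal-dual-convert-back} and~\ref{lem:opt-rho-monotone} are algebraic statements about this reformulation that do not invoke any concentration condition: they transfer to the bounded-covariance setting unchanged. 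In particular, $\OPT_\rho$ remains continuous and non-increasing in $\rho$, and $\OPT_{\rho^*} = 1$ at $\rho^* = 1/\OPT_{\nu,\eps}$.

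Next, I would run binary search on $\rho$ to locate a value where approximate solvers produce well-separated solutions. Concretely, for $\rho$ in the target interval $\{\rho : \OPT_\rho \in [1 - 2\eps/30,\; 1-\eps/30]\}$, solving the packing/covering SDPs to $(1 \pm O(\eps))$ accuracy via Corollary~\ref{cor:sdp-runtime} yields a primal solution $w'$ with $\normone{w'} \ge 1 - \eps/10$ and a dual solution $(M',y')$ with $\tr(M') + \normone{y'} \le 1$. Applying Lemma~\ref{lem:primal-dual-convert-back} converts these into solutions of SDPs~\eqref{eqn:primal-sdp}, \eqref{eqn:dual-sdp} with values $\ALG_P = 1/(\rho(1-\eps/10))$ and $\ALG_D = 1/\rho$, respectively. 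Assuming both $\ALG_P > c_4 \beta^2 = c_4$ and $\ALG_D < 0.9\, c_4 \beta^2 = 0.9 c_4$ forces $\rho > 1/(0.9 c_4)$ and $\rho < 1/(c_4(1-\eps/10))$, which requires $1 - \eps/10 < 0.9$, impossible for $\eps < 1/3$. So one of the two cases in the proposition statement is guaranteed. For case (ii), the additional bound $\ALG_D \ge (1-\eps/10)\OPT_{\nu,2\eps}$ follows from $\ALG_D = (1-\eps/10)\ALG_P$ together with $\ALG_P \ge \OPT_{\nu,2\eps}$ (since $\ALG_P$ is the value of a feasible solution to the $2\eps$-parameter primal SDP).

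To bound the binary search, I would use the hypothesis $\normtwo{\nu - \mus} \le O(\sqrt{d/\eps})$ together with Lemma~\ref{lem:ub-lb-opt-cov} to get $\OPT_{\nu,\eps} \le O(d/\eps)$, so $\rho^* \ge \Omega(\eps/d)$. By the same argument as in Lemma~\ref{lem:opt-rho-monotone}, the target interval has length at least $\Omega(\eps \rho^*) = \Omega(\eps^2/d)$, so binary search terminates in $O(\log(d/\eps))$ steps. Each step costs $\tilde O(Nd/\eps^6)$ by Corollary~\ref{cor:sdp-runtime}; repeating each invocation $O(\log\log(d/\eps))$ times and taking a union bound keeps the overall failure probability below $1/10$, giving total runtime $\tilde O(Nd/\eps^6)$.

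The main place where care is needed is the binary search range: because $\beta$ is now a constant rather than shrinking with $\eps$, one might fear that the interval of acceptable $\rho$ shrinks, but the argument above shows it is still $\Omega(\eps^2/d)$, which suffices for a logarithmic number of binary search steps. This is precisely where the preprocessing of Remark~\ref{rem:naive-pruning} pays off: it ensures every $\nu$ the algorithm ever considers (including every updated guess, since updates only decrease $\normtwo{\nu-\mus}$ by Lemma~\ref{lem:cov-good-dual}) satisfies $\normtwo{\nu-\mus} \le O(\sqrt{d/\eps})$, so the lower bound on $\rho^*$ holds throughout and Corollary~\ref{cor:sdp-runtime} is always invoked on a well-conditioned instance.
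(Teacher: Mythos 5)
Your proposal is correct and follows essentially the same route as the paper, which omits this proof precisely because it is the argument of Proposition~\ref{prop:approx-sdp} with the thresholds $1+c_4\beta^2$ and $1+0.9c_4\beta^2$ replaced by $c_4\beta^2$ and $0.9c_4\beta^2$ (making the primal/dual gap even larger) and with the binary-search range rederived from Lemma~\ref{lem:ub-lb-opt-cov} under $\normtwo{\nu-\mus}\le O(\sqrt{d/\eps})$. Your adaptation of both points, including the $\rho^\star\ge\Omega(\eps/d)$ bound and the $O(\log(d/\eps))$-step binary search, matches the intended proof.
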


We omit the proof of Proposition~\ref{prop:apx-solve-cov} because its proof is almost identical to the proof of Proposition~\ref{prop:approx-sdp}.
The only difference is that the ratio between the objective values of the desired primal/dual solutions is now $\frac{0.9 c_4 \beta^2}{c_4 \beta^2} = 0.9$, instead of $\frac{1+0.9c_4\beta^2}{1+c_4\beta^2}$ as in Proposition~\ref{prop:approx-sdp}.
The problem of computing a desired pair of solutions becomes easier since the gap is larger.

Theorem~\ref{thm:cov-mean} follows directly from Lemmas~\ref{lem:cov-good-primal},~\ref{lem:cov-good-dual},~and~Proposition~\ref{prop:apx-solve-cov}.
The running time analysis is identical to that of Theorem~\ref{thm:subg-mean}, we can move our guess $\nu$ at most $O(\log(d/\eps))$ times, and for each guess we invoke Proposition~\ref{prop:apx-solve-cov} to obtain a good primal or dual solution.
The overall running time is $\tilde O(N d \log(1/\tau) / \eps^6)$.

\begin{remark}
{\em
We note that in Algorithm~\ref{alg:sdp-boundedcov},
  we are interested in whether the optimal value of SDPs~\eqref{eqn:primal-sdp}~and~\eqref{eqn:dual-sdp} is at least $c_4$ or at most $0.9 c_4$.
Moreover, when we improve our guess $\nu$ using Lemma~\ref{lem:cov-good-dual}, the dual solution $M$ needs to be $0.95$-approximately optimal.
Therefore, we only need to solve SDPs~\eqref{eqn:primal-sdp}~and~\eqref{eqn:dual-sdp} to precision $(1-\eps')$ for some constant $\eps'$.
However, we do not know how to solve these SDPs directly in $O(Nd)$ time, and when we reduce them to packing/covering SDPs,
  the constraint on $w$ becomes the objective function in SDP~\eqref{eqn:primal-standard}, and we must solve SDP~\eqref{eqn:primal-standard} more precisely to precision $1 - \frac{\eps}{10}$ (see, e.g., Lemma~\ref{lem:primal-dual-convert-back}).
This is the only reason that we need to pay $\poly(1/\eps)$ in the running time of Algorithm~\ref{alg:sdp-boundedcov}.}
\end{remark}

\section{Conclusions and Future Directions} \label{sec:conc}

In this paper, we studied the problem of robust high-dimensional mean estimation for structured distribution families
in the presence of a constant fraction of corruptions. As our main technical contribution, 
we gave the first algorithms with dimension-independent error
guarantees for this problem that run in nearly-linear time. We hope that this work will serve as the starting point for the
design of faster algorithms for high-dimensional robust estimation. 

A number of natural directions suggest themselves:  Do our techniques generalize to robust covariance estimation? 
We believe so, but we have not explored this direction in the current work. 
Can we obtain nearly-linear time robust algorithms for other inference tasks 
under sparsity assumptions~\cite{BDLS17} (e.g., for robust sparse mean estimation or robust sparse PCA)? 
Can we speed-up the convex programs obtained via the SoS hierarchy in this setting~\cite{HopkinsL18, KothariSS18}?

The running time of our algorithms is $\tilde{O}(Nd)/\poly(\eps)$, i.e., it is 
nearly-linear when the fraction of corruptions $\eps$ is constant. Can we avoid the extraneous $\poly(1/\eps)$ dependence in the runtime?
We believe progress in this direction is attainable.
Note that solving a single covering SDP to multiplicative accuracy $(1+\eps)$ incurs a
$\poly(1/\eps)$ slowdown. Is it possible to reframe the underlying optimization problem so that 
a constant factor multiplicative accuracy suffices? 
Alternatively, is it possible to speed-up the iterative filtering technique of~\cite{DKKLMS16}?
Exploring alternate certificates of robustness may be a promising avenue towards these goals.

\section*{Acknowledgments}
We thank Alistair Stewart for useful discussions.

\bibliographystyle{alpha}
\bibliography{allrefs}

\appendix

\section{Setting Constants in Section~\ref{sec:mean}}
\label{apx:const}
In this section, we describe how to set universal constants $c_1, \ldots, c_7$ in Section~\ref{sec:mean}.
The constants are set in the following order: $c_1$, $c_2$, $c_4$, $c_5$, $c_7$, $c_6$, and $c_3$.
In this order, every $c_i$ only depends on the constants set before it, and there are only lower bounds on the value of $c_i$, so we can set $c_i$ to a sufficiently large constant.
Note that $c_3$ is the last constant we choose, and our guarantee at the end of the day is to output some hypothesis vector $\hat \mu$ that is close to the true mean $\mus$:
  $\normtwo{\hat \mu - \mus} \le c_3 \delta$.

Recall that in Section~\ref{sec:mean}, $0 < \eps < 1/3$, $\delta = c_1 \eps \sqrt{\ln(1/\eps)}$, $\delta_2 = c_1 \eps \ln(1/\eps)$, and $\beta = \sqrt{\eps \ln(1/\eps)}$.

The constant $c_1$ appears in the concentration bounds for the good samples (Condition~\eqref{eqn:good-sample-moments}),
and it is related to the constants in Chernoff bounds and Hanson-Wright inequality.
We can set $c_1$ to be any constant that Condition~\eqref{eqn:good-sample-moments} holds with the right sample complexity.

The constant $c_2$ is a threshold on $r = \normtwo{\nu - \mus}$.
When $r \ge c_2 \beta$, we can show that $\OPT$ is roughly $1 + r^2$.
We set $c_2$ to satisfy $\delta_2 + 2 \delta (c_2 \beta) \le 0.1 (c_2 \beta)^2$ as required by Lemma~\ref{lem:ub-lb-opt}.

The constant $c_4$ shows up in the branching statement of Algorithm~\ref{alg:pc-sdp}.
If $\OPT \le 1 + c_4 \beta^2$ we use the primal SDP solution, otherwise we use the dual SDP solution.
We set $c_4$ to satisfy $0.9 c_4^2 \ge 1.1 c_2^2$ in the proof of Lemma~\ref{lem:good-dual-better-nu}, and $\frac{c_4}{20}\beta^2 \ge \frac{\eps}{10}$ in the proof of Proposition~\ref{prop:approx-sdp}.

If we use the dual solution, we know that $r \ge c_2 \beta$.
If we use the primal solution, we have $r \le c_5 \beta$.
We choose $c_5$ where $c_5 \ge c_2$ and $0.9 c_5^2 \ge c_4$ as needed in the proof of Lemma~\ref{lem:wrong-mean-primal-nosol}.

In the proof of Lemma~\ref{lem:wrong-mean-primal-nosol}, the constants $c_6$ and $c_7$ appear when we argue that the bad samples contribute at least $\Omega(\delta^2/\eps)$ to the second-moment, and the good samples contribute at least $1 - O(\delta^2/\eps)$.
We choose $c_7$ such that $c_7 \ge 1 + \frac{2 c_5 \beta}{\sqrt{\ln(1/\eps)}}$, and $c_6$ such that $\frac{c_1^2 c_6^2}{2} \ge c_4 + c_1 c_7$.
Finally, because the good samples shift the mean by at most $\delta$, if the empirical mean is off by more than $c_3 \delta$ then most of the error are from the bad samples.
We choose $c_3$ so that $c_3 \ge c_6 + 1 + \frac{2c_5\sqrt{\eps}}{c_1}$.

\end{document}